\crefname{section}{Sec.}{Secs.}
\Crefname{section}{Section}{Sections}
\Crefname{table}{Table}{Tables}
\crefname{table}{Tab.}{Tabs.}
\definecolor{LightBlue}{rgb}{0.86,0.90,0.95}
\definecolor{LightGreen}{rgb}{0.86,0.95,0.90}
\newcommand{\cmark}{\ding{51}}%
\newcommand{\xmark}{\ding{55}}%
\def\thm@space@setup{\thm@preskip=0pt
\thm@postskip=0pt}
\newtheorem{proposition}{Proposition}
\newtheorem{lemma}{Lemma}
\newtheorem{assumption}{Assumption}
\Crefname{assumption}{Assumption}{Assumptions}
\newcommand{\sysname}{MULE}
\newcommand{\sysnameacor}{ACO-R}
\newcommand{\sysnamemenn}{Beta-ENN}
\newcommand{\sysnameedc}{M-EDC}
\newcommand{\indep}{\perp \!\!\! \perp}
\newcommand{\multiline}[1]{%
  \begin{tabularx}{\dimexpr\linewidth-\ALG@thistlm}[t]{@{}X@{}}
    #1
  \end{tabularx}
}
\begin{document}

\title{Open Set Action Recognition via Multi-Label Evidential Learning}

\author{Chen Zhao, Dawei Du, Anthony Hoogs, Christopher Funk\\
Kitware Inc.\\
{\tt\small $\{$chen.zhao, dawei.du, anthony.hoogs, christopher.funk$\}$@kitware.com}
}
\maketitle

\setlength{\abovedisplayskip}{3pt}
\setlength{\belowdisplayskip}{3pt}

\begin{abstract}
   Existing methods for open set action recognition focus on novelty detection that assumes video clips show a single action, which is unrealistic in the real world.
We propose a new method for open set action recognition and novelty detection via \textbf{MU}lti-\textbf{L}abel \textbf{E}vidential learning (\sysname{}), that goes beyond previous novel action detection methods by addressing the more general problems of single or multiple actors in the same scene, with simultaneous action(s) by any actor. 
Our Beta Evidential Neural Network estimates multi-action uncertainty with Beta densities based on actor-context-object relation representations. An evidence debiasing constraint is added to the objective function for optimization to reduce the static bias of video representations, which can incorrectly correlate predictions and static cues. We develop a learning algorithm based on a primal-dual average scheme update to optimize the proposed problem. Theoretical analysis of the optimization algorithm demonstrates the convergence of the primal solution sequence and bounds for both the loss function and the debiasing constraint. Uncertainty and belief-based novelty estimation mechanisms are formulated to detect novel actions.
Extensive experiments on two real-world video datasets show that our proposed approach achieves promising performance in single/multi-actor, single/multi-action settings.
\end{abstract}

\section{Introduction}
    \label{sec:intro}
Open set human action recognition has been studied in recent years due to its great potential in real-world applications, such as security surveillance~\cite{aggarwal2011human}, autonomous driving~\cite{roitberg2020open}, and face recognition~\cite{liu2017sphereface}. 
It differs from closed set problems that aim to classify human actions into a predefined set of known classes, since open set methods can identify samples with unseen classes with high accuracy~\cite{geng2020recent}.

\begin{figure}[t]
    \centering
    \includegraphics[width=\linewidth]{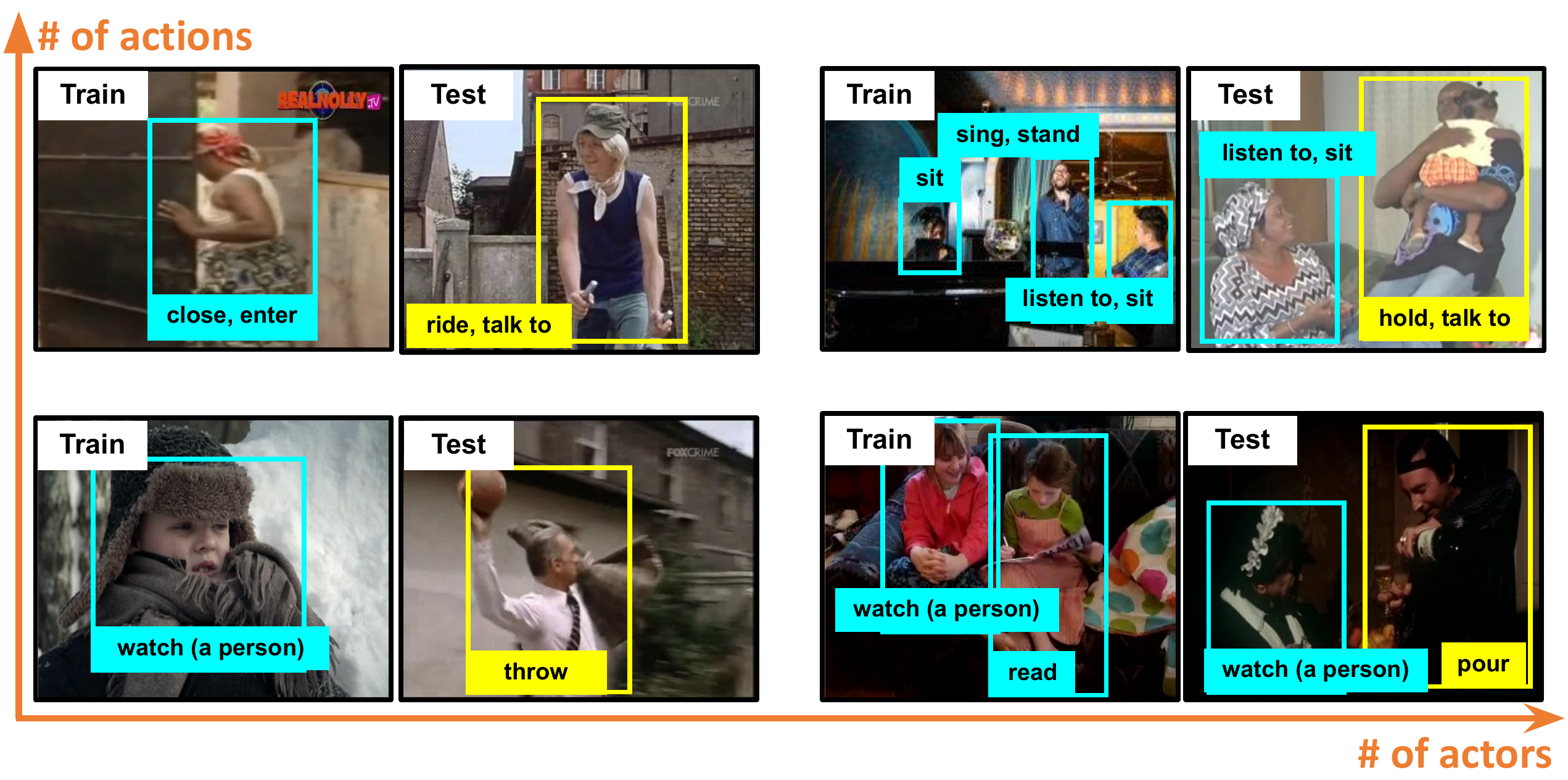}
    \vspace{-5mm}
    \caption{Novelty detection examples of single/multiple actor(s) with single/multiple action(s) in video~\cite{gu-2018-cvpr-ava,sigurdsson2016hollywood_Charades}, where an actor is identified as novel (\textcolor{Goldenrod!80!black}{yellow}) rather than being from a known category (\textcolor{cyan}{cyan}) in inference. Existing works~\cite{bao-2021-ICCV,bendale2016towards} on open set action recognition focus on single actor associated with single action (bottom-left), while our method can handle different situations.} 
    \vspace{-3mm}
    \label{fig:example}
\end{figure}    

To this end, several recent methods~\cite{bao-2021-ICCV,bendale2016towards} are proposed for open set human action recognition. 
As shown in the bottom-left of \Cref{fig:example}, they focus on single-actor, single-action based recognition, assuming that each video contains only one single action. 
Compared with traditional softmax scores~\cite{gal-2016-icml,liang2017enhancing,liu2020energy} for closed set recognition, evidential neural networks (ENNs)~\cite{sensoy-2018-nips,bao-2021-ICCV} can provide a principled way to jointly formulate the multi-class classification and uncertainty modeling to measure novelty of an instance more accurately. It assumes that class probability follows a prior Dirichlet distribution. 
However, in more realistic situation with multiple actions of actor(s) (see the upper part of \Cref{fig:example}), the Dirichlet distribution does not hold because the predicted likelihood of each action follows a binomial distribution (\ie, identifying either known or novel action).

In this paper, we introduce a general but understudied problem, namely \textit{novelty detection of actor(s) with multiple actions}. 
Given real-world use cases~\cite{geng2020recent,sozykin2018multi}, the goal is to accurately detect if actor(s) perform novel/unknown action(s) or not. Following~\cite{DBLP:conf/nips/WangLBL21}, an actor is considered unknown if it does not contain any known action(s).
Inspired by the belief theory~\cite{yager-2008-classic,josang-2016-subjective}, we propose a new framework named \textit{\textbf{MU}lti-\textbf{L}abel \textbf{E}vidential learning} (\sysname{}), which is composed of three modules: Actor-Context-Object Relation modeling (\sysnameacor{}), Beta Evidential Neural Network (\sysnamemenn{}), and Multi-label Evidence Debiasing Constraint (\sysnameedc{}).
First, we build \sysnameacor{} representation to exploit the actors' interactions with the surrounding objects and the context.
Then, we use \sysnamemenn{} to estimate the evidence of known actions, and quantify the predictive uncertainty of actions so that unknown actions would incur high uncertainty, \ie, lack of confidence for known predictions.
Here, the evidence indicates actions closest to the predicted one in the feature space and are used to support the decision-making~\cite{sensoy-2018-nips}. 
Instead of relying on Dirichlet distribution~\cite{bao-2021-ICCV}, the evidence in \sysnamemenn{} is regarded as parameters of a Beta distribution which is a conjugate prior of the Binomial likelihood. 

Additionally, in open set recognition, static bias~\cite{li-2018-eccv} may bring a false correlation between the prediction and static cues, such as scenes, resulting in inferior generalization capability of a model.
Therefore, the \sysnameedc{} is added to the objective function of our framework to reduce the static bias for video actions.
We propose a duality-based learning algorithm to optimize the network. 
Specifically, we apply an averaging scheme to proximate primal optimal solutions. 
The primal and dual parameters are updated interactively, where the primal parameters regard model accuracy and dual parameters adjust model debiasing.
The theoretical analysis shows the convergence of the primal solution sequence and gives bounds for both the loss function and the violation of the debiasing constraint in \sysname{}. 
To adapt to our proposed problem, we re-split two representative action recognition datasets (\ie, AVA~\cite{gu-2018-cvpr-ava} and Charades~\cite{sigurdsson2016hollywood_Charades}) into subsets of known actions and novel actions. 
According to the proposed uncertainty and belief based novelty estimation mechanisms, our model outperforms the state-of-the-art on novelty detection. 
The main contributions of this work are summarized:
\begin{itemize}[leftmargin=*,topsep=0pt,itemsep=1ex,partopsep=1ex,parsep=0ex]
    \item A new framework \sysname{} is proposed for open set action recognition in videos. To the best of our knowledge, this is the first study to detect actors with multiple unknown actions. Furthermore, our method can generalize to scenarios where a video contains either a single or multiple actors associated with one or more actions.
    \item To optimize the \sysnamemenn{}, we develop a simple but effective algorithm using a primal-dual average scheme update, with theoretical guarantees on the convergence of the primal solution sequence and bounds for both the loss function and the violation of the debiasing constraint.
    \item To estimate the novelty score for each actor, we introduce four novelty estimation mechanisms. Extensive experiments demonstrate that our proposed \sysname{} outperforms existing methods on novel action detection. 
\end{itemize}  

\begin{figure*}[t]
    \centering
    \includegraphics[width=0.9\linewidth]{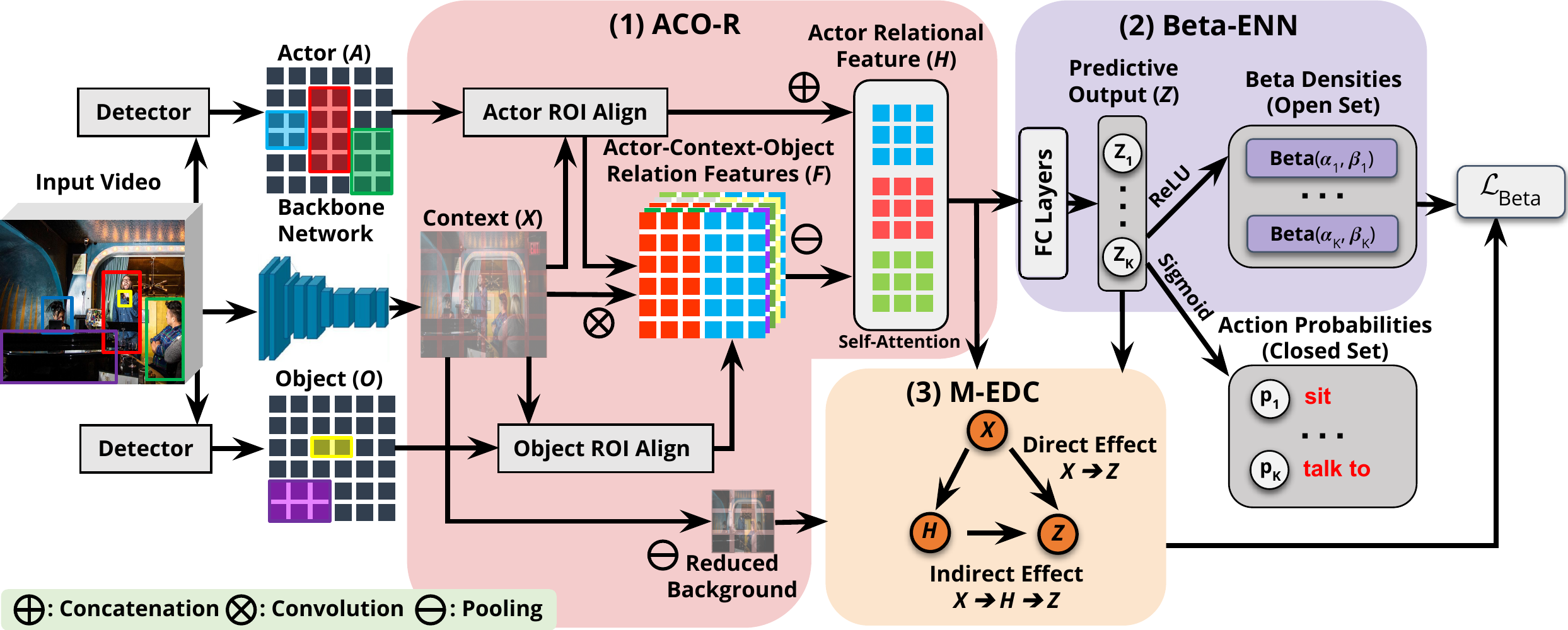}
    \caption{Our \sysname{} framework is composed of three modules. (1) Actor relational features are extracted through the \sysnameacor{} module, where they encode information based on interactions between actor/object instances with the context. (2) Positive and negative evidence are estimated through \sysnamemenn{} to quantify the predictive uncertainty of various human actions. (3) \sysnameedc{} is a debiasing constraint added to the loss function $\mathcal{L}_{beta}$ in \Cref{eq:opt-problem}, which aims to mitigate static bias.}
    \label{fig:arch}
    \vspace{-5mm}  
\end{figure*}
    
\section{Related Work}
    \label{sec:related work}
    \textbf{Open set action recognition.} 
Despite a great many explorations on video action recognition~\cite{DBLP:conf/iccv/Feichtenhofer0M19,feichtenhofer2020x3d,zhang2021multi}, the majority of existing methods are developed under the assumption that all actions are known a priori (closed set), and limited attention is given to the open set problems. 
OpenMax~\cite{bendale2016towards} is initially proposed for open set recognition, in which it leverages \textit{Extreme Value Theory}~\cite{DBLP:journals/pami/RuddJSB18} to expand the $K$-class softmax classifier. 
Roitberg~\etal~\cite{DBLP:conf/bmvc/RoitbergAS18} develops the voting-based scheme to leverage the estimated uncertainty of action predictions to measure the novelty of the test sample.
DEAR~\cite{bao-2021-ICCV} formulates the open set action recognition problem by estimating the uncertainty of single labeled actions to distinguish between the known and unknown samples. 
Fang~\etal~\cite{fang2021learning} theoretically prove the learnability and the generalization bound for an open set recognition classifier. 
However, such methods focus on simple scenarios where each actor has only one action in a video.

\textbf{Uncertainty estimation.} 
To distinguish unknown samples from known ones, how deep neural networks identify samples belonging to an unrelated data distribution becomes crucial. To this end, a stream of research on Bayesian Neural Nets (BNNs)~\cite{kendall2017uncertainties} is proposed to estimate prediction uncertainty by approximating the moments of the posterior predictive distribution. However, BNNs face several limitations, including the intractability of directly inferring the posterior distribution of the weights given data, the requirement and computational expense of sampling during inference, and the question of how to choose a weight prior~\cite{NEURIPS2020_aab08546}. 
Evidential Neural Networks (ENNs)~\cite{sensoy-2018-nips} have recently been proposed to estimate evidential uncertainty for multi-class classification problems. But they are designed for single-label multi-classification issues by assuming that class probability follows a prior Dirichlet distribution. 

\textbf{Debiasing.} 
Static bias is another challenging issue limiting the generalization capability of a model in an open-set setting~\cite{bao-2021-ICCV}. 
The manifestation of static bias can often be as fraught as the spurious correlation between the prediction and sensitive features like unrelated objects and background~\cite{li-2018-eccv, zhao2020fair}.
Existing works~\cite{li-2018-eccv,bao-2021-ICCV} empirically show that debiasing the model by input data or learned representation can improve the action recognition accuracy.
RESOUND~\cite{li-2018-eccv} indicates that static bias may help to achieve better results in a closed-set setting if an action can overfit it. 
On the contrary, for an open-set setting, DEAR~\cite{bao-2021-ICCV} states the static bias could result in a vulnerable model and further introduces the contrastive evidence debiasing by temporally shuffled feature input and 2D convolution. 
However, it still pushes the sensitive feature to be independent of the non-sensitive one only, in which the dependency of the sensitive feature on model predictions is ignored.       

To address the limitation of the above methods, we propose a new Beta distribution based network with multi-label evidential learning, where static bias is reduced in terms of both indirect and direct dependencies. To our knowledge, it is the first time to solve open set action recognition based on single or multiple actor(s) and action(s).
    
\section{Method}
    \label{sec:method}
    Given an input video, we aim to determine the predictive uncertainty of the detected actors with unknown actions.
For open set inference, actions with high and low uncertainty can be regarded as unknown and known, respectively. 
As illustrated in \Cref{fig:arch}, our method extracts Actor-Context-Object Relation (\sysnameacor{}) to train the Beta Evidential Neural Network (\sysnamemenn{}) with the Multi-label Evidence Debiasing Constraint (\sysnameedc{}). In the following, we describe each module in detail.

\subsection{Actor-Context-Object Relation Modeling}
\label{sec:ACOR}
Similar to~\cite{sun-2018-ECCV,pan2020actorcontextactor}, we extract rich information (\ie, actors, objects, and context) from videos to predict actions in complex scenes.
To reduce complexity, we can detect $N$ actors and $M$ objects for each input video clip based on the key frames.  
Based on the off-the-shelf Faster R-CNN detector~\cite{ren-2015-faster}, actor features $\{A^i\}_{i=1}^N\in\mathbb{R}^C$ and object features $\{O^i\}_{i=1}^M\in\mathbb{R}^C$ are produced by applying RoIAlign~\cite{he-2017-ICCV} followed by spatial max pooling, each of which describes the spatio-temporal appearance and motion of one Region of Interest (RoI). 
In addition, the SlowFast~\cite{DBLP:conf/iccv/Feichtenhofer0M19} backbone extracts a spatio-temporal feature volume from the input video. 
Then we can obtain the context feature $X\in\mathbb{R}^{C\times H \times W}$ by performing average pooling along the temporal dimension, where $C,H,W$ represent channel, height, and width. 

To model actor-context-object relations, we first concatenate all actor features to the context feature followed by object features to form a series of concatenated feature maps $\{F^i\}_{i=1}^{N\times M}\in\mathbb{R}^{3C\times H \times W}$. 
Following a stack of transformer blocks~\cite{pan2020actorcontextactor}, we can obtain actor relational features $\{H^i\}_{i=1}^{N\times M}$ by calculating the higher-order relations between pairs of $\{F^i\}_{i=1}^{N\times M}$ at the same spatial location, where two actors can be associated via the same spatial context and different objects.

\subsection{The Beta Evidential Neural Network}
\label{sec:edl}
As discussed in \Cref{sec:intro}, existing models~\cite{gal-2016-icml,liang2017enhancing,liu2020energy} typically rely on a softmax layer to perform multi-class classification. Since the softmax score is essentially a point estimation of a predictive distribution~\cite{sensoy-2018-nips,bao-2021-ICCV,gal-2016-icml}, the models cannot estimate the predictive uncertainty of out-of-distribution. 
To tackle this limitation, evidential neural networks (ENNs)~\cite{sensoy-2018-nips} are developed to jointly formulate the multi-class classification and uncertainty modeling. 
Specifically, ENNs~\cite{NEURIPS2020_aab08546, bao-2021-ICCV} 1) interpret the standard output of a classification network as the parameter set of a categorical distribution and 2) assume that class probability follows a prior Dirichlet distribution and replaces this parameter set with the parameters of a Dirichlet density for novelty detection.

However, in our setting, the assumption of using Dirichlet distribution is not a good fit for an actor with multiple actions. It is because the predicted likelihood for each action class follows a binomial distribution whose conjugate prior is a Beta distribution rather than Dirichlet distribution. To this end, we design the \sysnamemenn{} to classify known and novel actions based on Beta distributions.

\textbf{Beta distribution-based subjective opinions}. 
According to the belief theory~\cite{yager-2008-classic}, it is more reasonable to predict subjective opinions rather than class probabilities in an open-set setting. We hence use the principles of evidential theory to quantify belief masses and uncertainty in the proposed \sysnamemenn{} through Subjective Logic (SL)~\cite{josang-2016-subjective}. 

In multi-label action recognition, since each action follows a given binomial opinion towards the proposition, the subjective opinion $\omega_i=(b_i,d_i,u_i,a_i)$ for an action $i\in\{1,\cdots K\}$ of an actor is expressed by two belief masses, \ie, belief $b_i\in[0,1]$ and disbelief $d_i\in[0,1]$, and one uncertainty mass $u_i\in[0,1]$, where $b_i+d_i+u_i=1$. 
The expected belief probability $p_i$ is defined as $p_i=b_i+a_i\cdot u_i$, where $a_i$ refers to a base rate representing prior knowledge without commitment, such as neither agree nor disagree.

According to~\cite{sensoy-2018-nips}, a binomial opinion $\omega_i$ of action follows a Beta probability density function (pdf) denoted $\text{Beta}(p_i|\alpha_i,\beta_i)$, where $p_i\in[0,1]$ represents the action assignment probabilities. The Beta pdf is characterized by parameters $\alpha_i$ and $\beta_i$, where $\alpha_i$ and $\beta_i$ are viewed as positive and negative evidence of the observed action $i$, respectively. 
The evidence indicates actions closest to the predicted ones in the feature space and is used to support the decision-making.
Action labels are the same for positive evidence, but different for negative evidence.
Specifically, for each action, the opinion $\omega_i$ is obtained based on its corresponding $\alpha_i$ and $\beta_i$ using the rule in SL:
\begin{align}
\small
\label{eq:element-wise-sl}
    b_i=\frac{\alpha_i-a_i W}{\alpha_i+\beta_i},\quad d_i=\frac{\beta_i-a_i W}{\alpha_i+\beta_i},\quad u_i=\frac{W}{\alpha_i+\beta_i},
\end{align}
where we set the non-informative prior weight $W=2$ and base rate $a_i=1$ for each binary action classification (known/unknown) empirically.

Therefore, a collection of evidence pairs $\{(\alpha_i,\beta_i)\}_{i=1}^K$ of an actor is estimated to quantify the predictive uncertainty. An actor with multiple unknown actions would incur high uncertainty $u$ with low belief $b$ (see \Cref{sec:novelty_estimation}). 

\textbf{Learning opinions through Beta loss}.
In practice, for an actor $j$, its positive and negative evidence are estimated $\boldsymbol{\alpha}_j=s(h(\mathbf{x}_j;\boldsymbol{\theta}))+1$ and $\boldsymbol{\beta}_j=s(h(\mathbf{x}_j;\boldsymbol{\theta}))+1$, where $\boldsymbol{\alpha}_j=[\alpha_{1j},\cdots,\alpha_{Kj}]^T$ and $\boldsymbol{\beta}_j=[\beta_{1j},\cdots,\beta_{Kj}]^T$. 
$\mathbf{x}_j$ denotes the input video, $h(\mathbf{x}_j;\boldsymbol{\theta})$ represents the evidence vector predicted by the network for the classification and, $\boldsymbol{\theta}$ represents parameters for \sysnameacor~modeling.
Here $s(\cdot)$ is the evidence function (\textit{e.g.,} ReLU) to keep $\boldsymbol{\alpha}_j,\boldsymbol{\beta}_j\succcurlyeq \mathbf{1}$. 

To learn the above opinions, we define the Beta loss function by computing its Bayes risk for the action predictor. For the binary cross-entropy loss for each action $i$ over a batch of actors, the proposed Beta loss takes
\begin{align}
\label{eq:beta-loss}
    &\mathcal{L}_{Beta}(\boldsymbol{\theta})=\sum\nolimits_{j=1}^N\mathcal{L}_j(\boldsymbol{\theta}),
\end{align}
where $j\in\{1,\cdots,N\}$ denotes the index of an actor.
\begin{align}
\small
\label{eq:beta-loss-1-actor}
    \mathcal{L}_j(\boldsymbol{\theta}) \nonumber
    =&\sum\nolimits_{i=1}^K \int\textbf{BCE}(y_{ij},p_{ij})\textbf{Beta}(p_{ij};\alpha_{ij},\beta_{ij})dp_{ij} \nonumber\\
    =&\sum\nolimits_{i=1}^K \Big[y_{ij}\Big(\psi(\alpha_{ij}+\beta_{ij})-\psi(\alpha_{ij})\Big)\nonumber\\
    &+(1-y_{ij})\Big(\psi(\alpha_{ij}+\beta_{ij})-\psi(\beta_{ij})\Big)\Big], 
\end{align}
where $K$ is the number of actions, and $\textbf{BCE}(\cdot)$ is the binary cross-entropy loss, and $\psi(\cdot)$ is the \textit{digamma} function. The log expectation of Beta distribution derives the last equality. 
$\mathbf{y}_j=[y_{1j},\cdots,y_{Kj}]\in\{0,1\}^K$ is the $K$-dimensional ground-truth action(s) label for $\mathbf{x}_j$.

\subsection{Multi-Label Evidence Debiasing Constraint}
\label{sec:debias}
For open set action recognition, static bias~\cite{li-2018-eccv} could result in a vulnerable model that falsely recognizes an action containing similar static features.  
For example, the action of ``walking'' is easily recognized with ``road'' in the background, but it would be unable to recognize the same action with the ``treadmill'' scene.
From the perspective of fairness-aware learning~\cite{Zhao-ICDM-2019,zhao-KDD-2021}, as indicated in \Cref{fig:arch}, this is due to the spurious dependency of the predictive outcome $Z$ (\textit{e.g.,} actions) onto sensitive features $X$ (\textit{e.g.,} background scene), and strong dependency indicates strong effects.

To mitigate static bias, we introduce the evidence debiasing constraint in multi-label evidential learning. A fair prediction indicates no direct ($X\rightarrow Z$) or indirect ($X\rightarrow H \rightarrow Z$) dependency of $Z$ on the reduced $X$. These types of dependencies (direct and indirect) are supported by frameworks applied to large bodies of cases throughout statistical disparity~\cite{Barocas-CLR-2016}. 
Consequently, debiasing through both the direct and indirect effects enforces procedural fairness in decision-making by statistically mitigating the dependency of the sensitive feature $X$ on the prediction $Z$. It therefore guarantees outcome fairness among sensitive groups~\cite{Zhang-AAAI-2018}.

In particular, similar to~\cite{bahng-2020-ICML,bao-2021-ICCV}, the Hilbert-Schmidt Independence Criterion (HSIC) function measures the degree of independence between two continuous random variables. With radial basis function kernel $k_1$ and $k_2$, $\text{HSIC}^{k_1,k_2}(Z,\sigma(X))=0$ if and only if $Z\indep \sigma(X)$, where $\sigma(\cdot)$ is 2D average pooling operation. 
As shown in \Cref{fig:arch}, $Z\equiv h(\mathbf{x};\boldsymbol{\theta})$ represents the evidence vector predicted by the network, and $X$ indicates the context feature from the backbone. 
The debiasing constraint takes the form
\begin{align}
    g(\boldsymbol{\theta})\equiv \text{HSIC}\Big(h(\mathbf{x};\boldsymbol{\theta}),\sigma(X)\Big),
    \label{eq:constraint}
\end{align}
which aims to reduce both direct and indirect dependency of predictive outcomes onto background context.

\section{Optimization and Inference}
    \label{sec:algor}
    In summary, we combine the aforementioned \Cref{eq:beta-loss,eq:constraint} to formulate the optimization problem as
\begin{align}
\label{eq:opt-problem}
    \min_{\boldsymbol{\theta}\in\Theta}\quad \mathcal{L}_{Beta}(\boldsymbol{\theta}) \quad \text{subject to} \quad g(\boldsymbol{\theta})\leq \gamma,
\end{align}
where $\gamma>0$ is the independence criterion relaxation.
Previous methods, such as regularization or projection, can handle the constraint in \Cref{eq:opt-problem} with near-optimal solutions but do not directly provide primal solutions. It may even fail to produce any useful information for static bias. 
 
To better solve the problem, we develop the primal-dual average scheme update method. 
Specifically, we apply an averaging scheme to the primal sequence $\{\boldsymbol{\theta}^{(m)}\}_{m=1}^\infty$ to approximate primal optimal solutions, where $m$ represents the index of an iteration. In particular, the sequence $\{\Tilde{\boldsymbol{\theta}}^{(m)}\}_{m=1}^\infty$ is defined as the averages of the previous vectors through $\boldsymbol{\theta}^{(0)}$ to $\boldsymbol{\theta}^{(m-1)}$, \ie,
\begin{align}
\label{eq:primal-avg}
    \Tilde{\boldsymbol{\theta}}^{(m)} = \frac{1}{m} \sum\nolimits_{i=1}^{m-1}\boldsymbol{\theta}^{(i)}, \quad \forall m\geq 1.
\end{align}
The primal feasible iterate $\boldsymbol{\theta}^{(m)}$ is given in \Cref{eq:primal-solution}. 
To simplify, we abuse the notation $\mathcal{L}_{Beta}(\boldsymbol{\theta})$ with $\mathcal{L}(\boldsymbol{\theta})$ in the rest of the paper.
\begin{align}
\label{eq:primal-solution}
    \boldsymbol{\theta}^{(m)} \leftarrow \arg\min_{\boldsymbol{\theta}\in\Theta} &\Big\{\mathcal{L}(\boldsymbol{\theta}^{(m-1)}) + \lambda^{(m-1)}\Big(g(\boldsymbol{\theta}^{(m-1)})-\gamma\Big)\nonumber\\
    &-\frac{\delta}{2}\Big(\lambda^{(m-1)}\Big)^2\Big\},
\end{align}
where $\delta>0$ is a constant determined by analysis. Accordingly, the parameter $\lambda$ in dual solutions is updated as
\begin{align}
\small
\label{eq:dual-solution}
    \lambda^{(m)} \leftarrow \max\Big\{\Big[\lambda^{(m-1)}+\eta_2\Big(g(\Tilde{\boldsymbol{\theta}}^{(m)})-\gamma-\delta\lambda^{(m-1)}\Big)  \Big], 0\Big\},
\end{align}
where $\eta_2>0$ is a constant learning rate of the dual step.
By updating the dual parameter, $\lambda$, our optimization efficiently approaches the optimal model $\boldsymbol{\theta}^\ast$ arbitrarily close within a small finite number of steps $m$. 
For better understanding, the above algorithm is summarized in \Cref{alg:algor}.

\subsection{Theoretical Analysis}
To derive the bound on the feasibility violation and the primal cost of the running averages, we analyze the statistical guarantees of the solutions in \Cref{eq:primal-solution,eq:dual-solution}. We first make the following assumption.
\begin{assumption} (Regularity and Feasibility).
\label{assump1}
The convex set $\Theta$ is compact (\ie, closed and bounded). For any $\boldsymbol{\theta}\in\Theta$, $\mathcal{L}(\boldsymbol{\theta})$ and $g(\boldsymbol{\theta})$ are convex real-valued and bounded functions, where $\inf_{\boldsymbol{\theta}\in\Theta} g(\boldsymbol{\theta})=0$ and, for any $\boldsymbol{\theta}\notin \Theta$, \normalfont{dom}$(g(\boldsymbol{\theta}))=\emptyset$.
\end{assumption}
Recall that in the proposed \cref{alg:algor} used to approximate pairs of primal-dual parameters at each iteration $m$, for the averaged primal sequence $\{\Tilde{\boldsymbol{\theta}}^{(m)}\}$, we show that it always converges when $\Theta$ is compact.

\begin{proposition}(Convergence of Averaged Primal Sequence)
\label{prop1}
Under \Cref{assump1}, when the convex set $\Theta$ is compact, let the approximate primal sequence $\{\Tilde{\boldsymbol{\theta}}^{(m)}\}_{m=1}^{\infty}$ be the running averages of the primal iterates in \Cref{eq:primal-avg}. Then $\{\Tilde{\boldsymbol{\theta}}^{(m)}\}_{m=1}^{\infty}$ can converge to its limit $\Tilde{\boldsymbol{\theta}}^*$.
\end{proposition}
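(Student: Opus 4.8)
The plan is to get boundedness and a limit point from compactness almost for free, and then to do the real work of upgrading that limit point to a limit of the \emph{entire} sequence using the convex-optimization structure. First I would note that every primal iterate $\boldsymbol{\theta}^{(i)}$ defined by \Cref{eq:primal-solution} is a minimizer over $\Theta$, so $\boldsymbol{\theta}^{(i)}\in\Theta$; since $\Theta$ is compact it lies in a ball of radius $R:=\sup_{\boldsymbol{\theta}\in\Theta}\|\boldsymbol{\theta}\|<\infty$. Hence $\|\tilde{\boldsymbol{\theta}}^{(m)}\|\le\frac{1}{m}\sum_{i=1}^{m-1}\|\boldsymbol{\theta}^{(i)}\|\le R$, so $\{\tilde{\boldsymbol{\theta}}^{(m)}\}$ is bounded and, by Bolzano--Weierstrass, has at least one limit point $\tilde{\boldsymbol{\theta}}^*$; this point in fact lies in $\Theta$, because $\tilde{\boldsymbol{\theta}}^{(m)}$ differs by $O(1/m)$ from the genuine average $\frac{1}{m-1}\sum_{i=1}^{m-1}\boldsymbol{\theta}^{(i)}\in\Theta$ and $\Theta$ is closed.

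Second, I would control the increments of the running average. From \Cref{eq:primal-avg} a one-line rearrangement gives the recursion
\begin{align}
\tilde{\boldsymbol{\theta}}^{(m+1)}-\tilde{\boldsymbol{\theta}}^{(m)}=\frac{1}{m+1}\left(\boldsymbol{\theta}^{(m)}-\tilde{\boldsymbol{\theta}}^{(m)}\right),\nonumber
\end{align}
and since both $\boldsymbol{\theta}^{(m)}$ and $\tilde{\boldsymbol{\theta}}^{(m)}$ sit in the ball of radius $R$, we obtain $\|\tilde{\boldsymbol{\theta}}^{(m+1)}-\tilde{\boldsymbol{\theta}}^{(m)}\|\le\frac{2R}{m+1}\to0$. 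This asymptotic regularity is precisely the smoothing the averaging scheme is designed to produce: it damps the oscillations of the raw iterates $\{\boldsymbol{\theta}^{(m)}\}$ at rate $O(1/m)$.

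The crux of the proof, and the step I expect to be the main obstacle, is that boundedness plus vanishing increments do \emph{not} by themselves imply a unique limit --- a sequence can drift slowly through a whole continuum of limit points while its consecutive differences shrink. To rule this out I would exploit convexity rather than topology alone. Using convexity of $\mathcal{L}$ and $g$ from \Cref{assump1} together with the optimality conditions that define $\boldsymbol{\theta}^{(m)}$ in \Cref{eq:primal-solution}, the plan is to show that $\{\tilde{\boldsymbol{\theta}}^{(m)}\}$ is Fej\'er monotone with respect to the solution set $\Theta^\ast$ of \Cref{eq:opt-problem}, that is, $\|\tilde{\boldsymbol{\theta}}^{(m+1)}-\boldsymbol{\theta}^\ast\|\le\|\tilde{\boldsymbol{\theta}}^{(m)}-\boldsymbol{\theta}^\ast\|+\varepsilon_m$ for every $\boldsymbol{\theta}^\ast\in\Theta^\ast$, with a summable residual $\varepsilon_m$. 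The standard consequence is that the scalar sequence $\|\tilde{\boldsymbol{\theta}}^{(m)}-\boldsymbol{\theta}^\ast\|$ converges for each fixed $\boldsymbol{\theta}^\ast$; since the limit point $\tilde{\boldsymbol{\theta}}^*$ identified above belongs to $\Theta^\ast$, this forces all limit points to coincide, and the whole sequence therefore converges to $\tilde{\boldsymbol{\theta}}^*$.

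The delicate part will be extracting that Fej\'er inequality from \Cref{eq:primal-solution,eq:dual-solution}, because the dual variable $\lambda^{(m)}$ couples successive primal steps. The natural route is to pair the primal optimality of $\boldsymbol{\theta}^{(m)}$ with the dual ascent step and sum over $m$, absorbing the regularizer $\frac{\delta}{2}(\lambda^{(m-1)})^2$ and the $O(1/m)$ increment bound of the preceding paragraph into the summable residual $\varepsilon_m$. Here compactness supplies the finite radius $R$, convexity of $\Theta$ keeps the averages feasible in the limit, and convexity with boundedness of $\mathcal{L}$ and $g$ is exactly what converts the primal-dual update into the contraction-type estimate toward $\Theta^\ast$.
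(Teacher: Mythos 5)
Your first two steps (boundedness from compactness, and the increment recursion $\tilde{\boldsymbol{\theta}}^{(m+1)}-\tilde{\boldsymbol{\theta}}^{(m)}=\frac{1}{m+1}\big(\boldsymbol{\theta}^{(m)}-\tilde{\boldsymbol{\theta}}^{(m)}\big)$ with its $O(1/m)$ bound) coincide with the opening of the paper's argument, and your observation that boundedness plus vanishing increments does \emph{not} force a unique limit is exactly right --- indeed it is the weak point of the paper's own proof, whose \Cref{lemma:cauchy} telescopes the increments into the bound $\frac{2G(m'-m)}{m+1}$ and declares the sequence Cauchy, even though that quantity (and the sharper harmonic sum $\sum_{k=m+1}^{m'}2G/k$) does not tend to zero uniformly over all pairs $m'>m\geq Q$, which is what the Cauchy criterion requires.

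However, your proposed repair is a genuine gap rather than a proof. The entire burden is placed on the claim that $\{\tilde{\boldsymbol{\theta}}^{(m)}\}$ is Fej\'er monotone with respect to the solution set $\Theta^\ast$ with summable residuals $\varepsilon_m$, together with the claim that some cluster point of the averages lies in $\Theta^\ast$; neither is derived, and neither follows readily from \Cref{eq:primal-solution,eq:dual-solution} under \Cref{assump1}. The averaged iterate is not the output of a nonexpansive or projection-type operator, so there is no off-the-shelf Fej\'er inequality to invoke, and \Cref{prop:prop2} only places $\mathcal{L}(\tilde{\boldsymbol{\theta}}^{(m)})$ within an additive band $\frac{\eta_2 L^2}{2}$ of $f^\ast$ for a \emph{fixed} step size $\eta_2$, so the cluster points of the averages need not be optimal at all --- which removes the anchor your argument needs to conclude that all limit points coincide. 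As written, your proposal establishes existence of a convergent subsequence and asymptotic regularity and then asserts the conclusion. To close it you would have to actually extract $\|\tilde{\boldsymbol{\theta}}^{(m+1)}-\boldsymbol{\theta}^\ast\|^2\leq\|\tilde{\boldsymbol{\theta}}^{(m)}-\boldsymbol{\theta}^\ast\|^2+\varepsilon_m$ with $\sum_m\varepsilon_m<\infty$ from the specific primal-dual updates, or find another mechanism that collapses the set of cluster points to a single point; the paper's own route (a direct Cauchy estimate on the averages) avoids this machinery but, as your own ``crux'' remark implicitly shows, does not survive scrutiny either.
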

 
Next we provide bounds on the feasibility violation $g(\Tilde{\boldsymbol{\theta}}^{(m)})$ and the primal cost of the running averages $\mathcal{L}(\Tilde{\boldsymbol{\theta}}^{(m)})$, where the bounds are given per iteration $m$. 

\begin{proposition}(Bounds for $\mathcal{L}(\Tilde{\boldsymbol{\theta}}^{(m)})$ and the violation of $g(\Tilde{\boldsymbol{\theta}}^{(m)})$~\cite{averagedDS-SJO-2009})
\label{prop:prop2}
Let the dual sequence $\{\lambda^{(m)}\}_{m=1}^{\infty}$ be generated through \Cref{eq:dual-solution} and $\{\Tilde{\boldsymbol{\theta}}^{(m)}\}_{m=1}^{\infty}$ be the averages in \Cref{eq:primal-avg}. Under \cref{assump1}, we have
\begin{enumerate}[leftmargin=*,topsep=0pt,itemsep=0ex,partopsep=0ex,parsep=0ex]
    \item An upper bound on the amount of constraint violation of $\Tilde{\boldsymbol{\theta}}^{(m)}$ that $\big\lVert \big[g(\Tilde{\boldsymbol{\theta}}^{(m)})\big]_+\big\rVert \leq \frac{\lambda^{(m)}}{m\eta_2}$.
    \item An upper bound on $\mathcal{L}(\Tilde{\boldsymbol{\theta}}^{(m)})$ that $\mathcal{L}(\Tilde{\boldsymbol{\theta}}^{(m)}) \leq f^*+\frac{(\lambda^{(0)})^2}{2m\eta_2}+\frac{\eta_2 L^2}{2}$, where $\big\lVert g(\Tilde{\boldsymbol{\theta}}^{(m)})\big\rVert<L$ and $L>0$.
    \item A lower bound $\mathcal{L}(\Tilde{\boldsymbol{\theta}}^{(m)}) \geq f^*-\lambda^*\cdot\big\lVert \big[g(\Tilde{\boldsymbol{\theta}}^{(m)})\big]_+\big\rVert$.
\end{enumerate}
where $[u]_+$ denotes the projection of $[u]$ on the nonnegative orthant. $f^*$ is the optimal solution of \Cref{eq:opt-problem} and $\lambda^\ast$ denotes the optimal value of the dual variable.
\end{proposition}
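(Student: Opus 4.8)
The plan is to read \Cref{eq:primal-solution,eq:dual-solution} as an averaged dual-subgradient method and follow the rate analysis of~\cite{averagedDS-SJO-2009}, so that all three bounds fall out of the dual recursion combined with convexity of the running average. Before touching the three parts I would record the consequences of \Cref{assump1}: the Lagrangian $\mathcal{L}(\boldsymbol{\theta})+\lambda\bigl(g(\boldsymbol{\theta})-\gamma\bigr)$ is convex in $\boldsymbol{\theta}$; compactness of $\Theta$ together with boundedness of $g$ furnishes the uniform constant $L$ with $\lVert g\rVert<L$; and because $\inf_{\boldsymbol{\theta}\in\Theta}g=0<\gamma$, Slater's condition holds, giving strong duality, so the dual function $q(\lambda)=\min_{\boldsymbol{\theta}\in\Theta}\{\mathcal{L}(\boldsymbol{\theta})+\lambda(g(\boldsymbol{\theta})-\gamma)\}$ attains its maximum at some $\lambda^*\ge0$ with $q(\lambda^*)=f^*$.

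For the feasibility bound (Part 1) I would drop the outer projection in \Cref{eq:dual-solution} via $[z]_+\ge z$ to get $\lambda^{(m)}\ge\lambda^{(m-1)}+\eta_2\bigl(g(\boldsymbol{\theta}^{(m)})-\gamma-\delta\lambda^{(m-1)}\bigr)$, telescope this inequality from $1$ to $m$, and bound the per-iterate constraint terms from below using convexity of $g$ on $\Tilde{\boldsymbol{\theta}}^{(m)}=\frac1m\sum_{i=1}^{m-1}\boldsymbol{\theta}^{(i)}$, so that $\sum_i g(\boldsymbol{\theta}^{(i)})\ge m\,g(\Tilde{\boldsymbol{\theta}}^{(m)})$; this Jensen step is exactly where the factor $m$ in the denominator is produced. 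Using $\lambda^{(m)}\ge0$ and $\lambda^{(0)}\ge0$ then yields $g(\Tilde{\boldsymbol{\theta}}^{(m)})\le\lambda^{(m)}/(m\eta_2)$, and projecting onto the nonnegative orthant and taking norms gives $\lVert[g(\Tilde{\boldsymbol{\theta}}^{(m)})]_+\rVert\le\lambda^{(m)}/(m\eta_2)$.

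The two cost bounds follow from a squared-distance recursion. For Part 2 I would apply nonexpansiveness of the projection with reference point $0$ to obtain $(\lambda^{(m)})^2\le(\lambda^{(m-1)})^2+2\eta_2\lambda^{(m-1)}\bigl(g(\boldsymbol{\theta}^{(m)})-\gamma\bigr)+\eta_2^2 L^2$, and use the Lagrangian optimality of $\boldsymbol{\theta}^{(m)}$ against the feasible optimizer $\boldsymbol{\theta}^*$ (with $g(\boldsymbol{\theta}^*)\le\gamma$ and $\lambda^{(m-1)}\ge0$) to replace $\lambda^{(m-1)}\bigl(g(\boldsymbol{\theta}^{(m)})-\gamma\bigr)$ by the upper bound $f^*-\mathcal{L}(\boldsymbol{\theta}^{(m)})$. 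Telescoping the squared-dual terms, discarding $-(\lambda^{(m)})^2\le0$, dividing by $2m\eta_2$, and passing from $\frac1m\sum\mathcal{L}(\boldsymbol{\theta}^{(i)})$ to $\mathcal{L}(\Tilde{\boldsymbol{\theta}}^{(m)})$ by convexity produces $\mathcal{L}(\Tilde{\boldsymbol{\theta}}^{(m)})\le f^*+(\lambda^{(0)})^2/(2m\eta_2)+\eta_2 L^2/2$. For Part 3 I would invoke strong duality directly: $\mathcal{L}(\Tilde{\boldsymbol{\theta}}^{(m)})+\lambda^*\bigl(g(\Tilde{\boldsymbol{\theta}}^{(m)})-\gamma\bigr)\ge q(\lambda^*)=f^*$, and since $-\lambda^* t\ge-\lambda^*[t]_+$ for $\lambda^*\ge0$, substituting $t=g(\Tilde{\boldsymbol{\theta}}^{(m)})-\gamma$ and taking norms gives $\mathcal{L}(\Tilde{\boldsymbol{\theta}}^{(m)})\ge f^*-\lambda^*\lVert[g(\Tilde{\boldsymbol{\theta}}^{(m)})]_+\rVert$.

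The main obstacle I anticipate is the proximal/damping term: the $-\tfrac{\delta}{2}(\lambda^{(m-1)})^2$ penalty in \Cref{eq:primal-solution} and the companion $-\delta\lambda^{(m-1)}$ in \Cref{eq:dual-solution} are absent from the textbook averaged dual-subgradient scheme and contaminate every telescoping sum with extra increments $-\eta_2\delta\lambda^{(m-1)}$. I would need to verify that these increments either telescope into a negligible residual or are dominated for a suitable choice of $\delta$, so that the clean bounds above survive. A related subtlety is that \Cref{eq:dual-solution} as written evaluates $g$ at the averaged iterate $\Tilde{\boldsymbol{\theta}}^{(m)}$ rather than at the current $\boldsymbol{\theta}^{(m)}$; the Jensen step that creates the factor $m$ needs the per-iterate values $g(\boldsymbol{\theta}^{(i)})$, so I must reconcile which point enters the dual subgradient and confirm that the offset $\gamma$ is consistent with the $\inf g=0$ normalization that lets the statement write $[\cdot]_+$ without the $-\gamma$ shift.
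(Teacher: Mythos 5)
Your plan follows essentially the same route as the paper's proof: telescoping the dual recursion together with Jensen's inequality on the running average for Part~1, the squared-dual recursion combined with $q^*=f^*$ for Part~2, and the saddle-point inequality $\mathcal{L}(\boldsymbol{\theta})+\lambda^*g(\boldsymbol{\theta})\geq q(\lambda^*)$ for Part~3. The obstacles you flag at the end --- the $-\delta\lambda^{(m-1)}$ damping term, the $-\gamma$ offset, and the mismatch between $g(\boldsymbol{\theta}^{(i)})$ and $g(\Tilde{\boldsymbol{\theta}}^{(i)})$ in the dual step --- are genuine, but the paper's own proof simply drops these terms without comment, so you are not missing any idea that the paper actually supplies.
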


\Cref{prop1,prop:prop2} demonstrate the convergence of the primal solution sequence and give bounds for both the loss function and debiasing constraint in \sysname{}. The detailed proof is given in the Appendix.


\begin{algorithm}[t!]
\small
\caption{Primal-Dual Average Scheme Update}
\label{alg:algor}
\textbf{Input}: $\boldsymbol{\theta}^{(0)}\in\Theta, \lambda^{(0)}\in\mathbb{R}_+$: primal and dual parameters\\
\textbf{Require}: $\eta_1,\eta_2> 0$: learning rates
\begin{algorithmic}[1]
\State Initialize an empty buffer $B=[\:]$ to store $\boldsymbol{\theta}^{(m)}$
    \For{$m=1,2,...$}
        \State \multiline{%
            $L(\boldsymbol{\theta},\lambda) := \mathcal{L}(\boldsymbol{\theta}) + \lambda(g(\boldsymbol{\theta})-\gamma)-\frac{\delta}{2}\lambda^2$ }
        \State \multiline{%
            \textbf{Primal Update:} \\ 
            $\boldsymbol{\theta}^{(m)} \leftarrow \text{Adam}\Big\{L\Big(\boldsymbol{\theta}^{(m-1)},\lambda^{(m-1)}\Big), \eta_1, \boldsymbol{\theta}^{(m-1)}\Big\}$ }
        \State Add $\boldsymbol{\theta}^{(m)}$ in $B$ 
        \State \multiline{%
            \textbf{Average Scheme:} 
            $\Tilde{\boldsymbol{\theta}}^{(m)} = \frac{1}{|B|}\sum_{i=1}^{|B|-1}\boldsymbol{\theta}^{(i)}$}
        \State Update $\boldsymbol{\theta}^{(m)}\leftarrow \Tilde{\boldsymbol{\theta}}^{(m)}$
        \State $L'(\boldsymbol{\theta},\lambda):=\lambda + \eta_2 \Big(g(\boldsymbol{\theta})-\gamma-\delta\lambda\Big)$
        \State \multiline{%
            \textbf{Dual Update:}\\
            $\lambda^{(m)} \leftarrow \max\Big\{L'\Big(\Tilde{\boldsymbol{\theta}}^{(m)},\lambda^{(m-1)}\Big), 0\Big\}$ }
    \EndFor
\end{algorithmic}
\end{algorithm}

\subsection{Novelty Score Estimation}
\label{sec:novelty_estimation}
During inference, we aim to detect novel actor(s) with single or multiple unknown action(s). According to \Cref{eq:element-wise-sl}, we develop four novelty quantification scores based on either uncertainty or belief of an actor.
To this end, we incorporate the actor's estimated subjective opinions $\{\omega_i\}_{i=1}^K$ for its actions, where $\omega_i=(b_i,d_i,u_i,a_i)$.

\textbf{Uncertainty-based novelty score.} 
As described in \Cref{sec:edl}, the predicted positive and negative evidence pair $\{(\alpha_i,\beta_i)\}_{i=1}^K$ are used to estimate uncertainty $u$ of an actor with $K$ actions. 
A value of $u$ close to $1$ indicates novelty. Three uncertainty-based novelty score estimation mechanisms are introduced by using positive (PE) or negative (NE) evidence only and aggregating them (PNE), \ie,
\begin{equation}
\small   %
\label{eq:novelty-scores}
\begin{aligned}
     \text{PE}: u =& \frac{2}{1+\exp({\sum\nolimits_i^K \alpha_i-K})}, \\
     \text{NE}: u =& \frac{2}{1+\exp({K-\sum\nolimits_i^K \beta_i})}-1, \\
    \text{PNE}: u =& \frac{2K}{\sum\nolimits_i^K (\alpha_i+\beta_i)}.
\end{aligned}
\end{equation}

\textbf{Belief-based novelty score.} 
Another novelty detection scheme is to estimate its belief value $b$ using the binomial co-multiplication operator (denoted as $\ast$)~\cite{Audun2006BeliefCalculus} for all actions, 
\begin{align}
    &b = b_1\ast\cdots\ast b_K, \\
    &\text{where } b_i * b_j := b_i+b_j-b_i\cdot b_j, \forall i,j\in\{1,\cdots K\}, i\neq j, \nonumber
\end{align}
$b_i\in[0,1]$ is a class-wise belief estimated using its corresponding positive and negative evidence values $\alpha_i$ and $\beta_i$ in \Cref{eq:element-wise-sl}. A belief value $b$ close to $0$ indicates novelty. 

\subsection{Relation with Existing Evidential Learning}
Although our method shares the basic concept of evidential learning with DEAR~\cite{bao-2021-ICCV}, it has a significant difference in three aspects.
\begin{itemize}[leftmargin=*,topsep=0pt,itemsep=0ex,partopsep=0ex,parsep=0ex]
    \item \textit{The Beta distribution in \Cref{eq:beta-loss} is generalized from the Dirichlet distribution in ENNs.} In other words, to detect an actor with multiple novel actions, we improve original ENNs~\cite{sensoy-2018-nips,bao-2021-ICCV} by using $K$ Beta distributions. If $K=1$, our loss function is reduced to the counterpart in DEAR~\cite{bao-2021-ICCV} (see the proof in the Appendix).
    \item \textit{The \sysnameedc{} in \Cref{eq:constraint} simultaneously mitigates direct and indirect bias.} In contrast, the contrastive evidence debiasing module in DEAR~\cite{bao-2021-ICCV} only considers dependencies of predictive outcome $Z$ on the sensitive feature $X$ through the causal path $X\rightarrow H$. This is viewed as a sub-path of the indirect dependency $X\rightarrow H\rightarrow Z$, resulting in inferior performance.
    \item \textit{Our optimization method in \Cref{alg:algor} provides optimal hyper-parameter search for robust learning.} DEAR~\cite{bao-2021-ICCV} views the debiasing constraint as a regularization term with the empirically set Lagrange multiplier.
    In contrast, an optimal multiplier $\lambda^\ast$ is automatically found as a dual variable by \Cref{alg:algor}. Iterative update between the primal and dual variable guarantees the achievement of a small duality gap.
\end{itemize}

\textbf{Discussion.} Dirichlet densities-based ENNs for open set action recognition focus on detecting actors with a single action. Such methods assume action probability follows a prior Dirichlet distribution. In contrast, in our \sysname{}, Beta distributions are more general to adapt to different applicable scenarios where a video contains either a single ($N=1$) or multiple ($N>1$) actor(s) associated with a single ($K=1$) or multiple ($K>1$) action(s).

\section{Experiments}
    \label{sec:exp}
    Our method is implemented based on PyTorch~\cite{paszke2019pytorch}. All models are trained on $4$ NVIDIA Quadro RTX 8000 GPUs. The code will be released upon acceptance.

\begin{table*}[t]
\footnotesize
    \centering
    \rowcolors{3}{white}{gray!15}
    \setlength\tabcolsep{8pt}
    \begin{tabular}{c|c|c|c|c|c}
        \toprule\rowcolor{LightBlue} 
         & \multicolumn{4}{c|}{\textbf{PE} / \textbf{NE} / \textbf{PNE} / \textbf{Belief}} & \textbf{hours}\\ \rowcolor{LightBlue} 
        \multirow{-2}{*}{\textbf{$m$}} & \textbf{Error}$\downarrow$ & \textbf{AUROC}$\uparrow$ & \textbf{AUPR}$\uparrow$ & \textbf{FPR at 95\% TPR}$\downarrow$ & \textbf{per epoch}\\
        \cmidrule(r){1-1} \cmidrule(lr){2-2} \cmidrule(lr){3-3} \cmidrule(lr){4-4} \cmidrule(l){5-5}\cmidrule(l){6-6}
        $0$ & 31.25 / 35.89 / 25.17 / 40.82 & 75.23 / 70.01 / 75.04 / 69.45 & 86.10 / 83.65 / 87.33 / 90.87 & 7.32 / 9.21 / 8.98 / 8.81 & 4\\
        $1$ & 11.21 / 41.20 / 11.16 / 33.33 & 86.42 / 61.26 / \textbf{85.72} / 73.34 & 94.23 / \textbf{96.47} / \textbf{99.43} / \textbf{97.07} & \textbf{4.32} / 5.01 / 5.46 / \textbf{9.09} & 5\\
        $2$ & 11.22 / \textbf{40.14} / 12.18 / 27.91 & \textbf{86.92} / \textbf{63.54} / 85.25 / 83.41 & 96.18 / 89.90 / 93.46 / 90.90 & 4.98 / 5.09 / 4.51 / 9.15 & 9\\
        $3$ & 11.01 / 45.05 / 11.17 / 27.86 & 86.81 / 58.23 / 85.25 / \textbf{85.18} & 98.48 / 95.65 / 99.41 / 90.44 & 4.58 / \textbf{5.00} / 3.98 / \textbf{9.09} & 13\\
        $4$ & 11.28 / 47.12 / 11.17 / 27.52 & 86.56 / 59.01 / 85.13 / 84.72 & 99.43 / 88.98 / 99.40 / 90.42 & 4.64 / 5.02 / 3.21 / \textbf{9.09} & 18\\
        $5$ & \textbf{10.22} / 44.32 / \textbf{11.15} / \textbf{27.49} & 86.86 / 58.87 / 85.30 / 84.66 & \textbf{99.52} / 96.35 / 99.41 / 90.41 & \textbf{4.32} / \textbf{5.00} / \textbf{3.09} / \textbf{9.09} & 23\\
        \bottomrule
    \end{tabular}
    \vspace{-3mm}
    \caption{Exploration of number of average primal-dual updating step $m$ on AVA~\cite{gu-2018-cvpr-ava}. }
    \vspace{-2mm}
    \label{tab:ablation-k_iter}
\end{table*}

\begin{table*}[t]
\footnotesize
    \centering  
  \rowcolors{3}{white}{gray!15}
    \setlength\tabcolsep{3pt}
    \begin{tabular}{l|c|c|c|c|c}
        \toprule \rowcolor{LightBlue} 
         & \multicolumn{4}{c|}{\textbf{PE} / \textbf{NE} / \textbf{PNE} / \textbf{Belief}} & \textbf{Closed Set}\\ \rowcolor{LightBlue} 
        \multirow{-2}{*}{\textbf{Method}} & \textbf{Error}$\downarrow$ & \textbf{AUROC}$\uparrow$ & \textbf{AUPR}$\uparrow$ & \textbf{FPR at 95\% TPR}$\downarrow$ & \textbf{mAP}$\uparrow$\\
        \cmidrule(r){1-1} \cmidrule(lr){2-2} \cmidrule(lr){3-3} \cmidrule(lr){4-4} \cmidrule(l){5-5} \cmidrule(l){6-6}        
         \rowcolor{LightGreen} \sysname{}, R-50 & 11.22 / 40.14 / 12.18 / 27.91 & 86.92 / 63.54 / 85.25 / 83.41 & 96.18 / 89.90 / 93.46 / 90.90 & 4.98 / 5.09 / 4.51 / 9.15 & 27.80\\
        \midrule
        w/o \sysnameacor{} & 50.12 / 50.04 / 12.45 / 39.84 & 51.23 / 51.12 / 84.88 / 82.41  & 86.11 / 86.36 / 88.13 / 92.35 & 15.23 / 15.56 / 34.44 / 9.57 & 25.12\\
        w/o \sysnamemenn{} & 35.12 / 35.31 / 35.12 / 34.78 & 57.10 / 57.59 / 57.88 / 57.01 & 85.71 / 85.65 / 85.66 / 85.66 & 12.03 / 13.00 / 14.98 / 15.32 & 28.81\\
        w/o \sysnameedc{} & 13.16 / 46.32 / 12.16 / 38.23 & 86.00 / 50.13 / 85.12 / 83.05 & 95.12 / 90.67 / 92.45 / 89.19 & 6.31 / 5.05 / 5.01 / 9.59 & 27.16\\
        \bottomrule
    \end{tabular}
    \vspace{-3mm}
    \caption{Exploration of different component in \sysname{} with $m=2$ on AVA~\cite{gu-2018-cvpr-ava}. Our complete system is highlighted in \textcolor{Green}{green}.}
    \vspace{-5mm}
    \label{tab:ablation-components}
\end{table*}

\textbf{Datasets.}
Two video datasets covering different cases are used in our experiment.
AVA~\cite{gu-2018-cvpr-ava} is a video dataset for spatio-temporal localizing atomic visual actions. It contains $430$ videos, each with $15$ minutes annotated in $1$-second intervals. Box annotations and their corresponding action labels are provided on key frames. We use version 2.2 of the AVA dataset by default. 
Charades~\cite{sigurdsson2016hollywood_Charades} contains $9,848$ videos that average $30$ seconds in length. This dataset includes $157$ multi-label, daily indoor activities. 

\textbf{Implementation details.} 
Similar to~\cite{pan2020actorcontextactor}, we employ COCO~\cite{lin-2014-eccv} pre-trained Faster R-CNN~\cite{ren-2015-faster} with a ResNeXt-101-FPN~\cite{lin-2017-CVPR} backbone to extract actor and object proposals on key frames. 
To extract context features, Kinetics~\cite{kay-2017-kinetics} pre-trained SlowFast networks~\cite{DBLP:conf/iccv/Feichtenhofer0M19} are used as the backbone of our method. For AVA~\cite{gu-2018-cvpr-ava}, the inputs are $64$-frame clips, where we sample $T=8$ frames with a temporal stride $\tau=8$ for the slow pathway, and $\zeta T$ frames, where $\zeta=4$, for the fast pathway. 
For Charades~\cite{sigurdsson2016hollywood_Charades}, the temporal sampling for the slow pathway is changed to $8\times4$, and the fast pathway takes as an input $32$ continuous frames. 
We train all models end-to-end using synchronous SGD with a batch size of $32$. Linear warm-up~\cite{goyal-2017-arXiv} is performed during the first several epochs. We used both ground-truth boxes and predicted human boxes from~\cite{wu-2019-CVPR} for training. We scale the shorter side of input frames to $256$ pixels for testing. We use detected human boxes with scores greater than $0.85$ for final action detection. 

\subsection{Open- and Closed-Set Settings}
Since the above datasets are used for traditional action recognition, we re-split them to adapt to our problem in this work.
For open-set settings, videos are evenly divided into three disjoint sets $\mathcal{Z}_1, \mathcal{Z}_2$, and $\mathcal{Z}_3$. We only include actions falling in $\mathcal{Z}_1\cup\mathcal{Z}_2$ for training and actions in $\mathcal{Z}_2\cup\mathcal{Z}_3$ for testing.
Thus $\mathcal{Z}_2$ and $\mathcal{Z}_3$ are a set of known actions and novel actions in inference, respectively.
In practice, each subset in AVA~\cite{gu-2018-cvpr-ava} contains $20$ actions, while actions in Charades~\cite{sigurdsson2016hollywood_Charades} are evenly divided into three subsets ($52/52/53$). 
An actor is considered as novelty (unknown) if it does not contain any action in the training data. 
To detect actors with novel actions, in the testing stage, we ensure that each actor contains ground-truth actions in either $\mathcal{Z}_2$ or $\mathcal{Z}_3$ exclusively \cite{DBLP:conf/nips/WangLBL21}. 
We hence assign each actor in testing videos with a binary novelty label $\{0,1\}$, where $0$ indicates an actor with all known actions in $\mathcal{Z}_2$ and correspondingly, $1$ indicates an actor with all unknown actions in $\mathcal{Z}_3$. 

Closed set action recognition refers to classifying actions into pre-defined categories. Following~\cite{pan2020actorcontextactor}, the closed set studies on AVA~\cite{gu-2018-cvpr-ava} use $235$ videos to train and test on $131$ videos with known actions. 
For Charades~\cite{sigurdsson2016hollywood_Charades}, following~\cite{DBLP:conf/cvpr/ZhangLM21}, we use the officially provided train-test split ($7,985/1,863$) to evaluate the network where all actions are known. 
As this work focuses on open set action recognition, closed set accuracy is for reference only.

\textbf{Evaluation metrics.}
Similar to~\cite{liang2017enhancing,liu2020energy}, we adopt the following four metrics to evaluate the performance on novel action detection, \ie, estimate if the action of an actor is novel or not. 
1) \textbf{Detection Error}~\cite{liang2017enhancing} measures the misclassification probability when True Positive Rate (TPR) is $95\%$. The definition of an error $P_e$ is given by $P_e=0.5\cdot(1-\text{TPR})+0.5\cdot\text{FPR}$, where FPR stands for False Positive Rate. 
2) \textbf{AUROC}~\cite{davis2006relationship} is the Area Under the Receiver Operating Characteristic curve, which depicts the relation between TPR and FPR. A perfect detector corresponds to an AUROC score of $1$. 
3) \textbf{AUPR}~\cite{Manning-1999-AUPR} is the Area under the Precision-Recall curve. The PR curve is a graph showing the precision and recall against each other. 
4) \textbf{FPR at 95\% TPR}~\cite{liang2017enhancing} can be interpreted as the probability that a novel example is misclassified as known when TPR is $95\%$.
Additionally, we report the Mean Average Precision (\textbf{mAP}) for $K$-class classification in closed set. 

\subsection{Ablation Study}
To further explore our method, we conduct a detailed ablation study on AVA~\cite{gu-2018-cvpr-ava}. In the following tables, evaluation metrics with ``$\uparrow$" indicate the larger the better, and ``$\downarrow$" indicate the smaller the better.

\textbf{Effectiveness of optimization algorithm.}
We investigate the effectiveness of applying an averaging scheme to the primal sequence in \Cref{alg:algor}. 
According to \Cref{tab:ablation-k_iter}, the larger the primal-dual updating step $m$, the better performance and the lower efficiency. 
If $m=0$, it indicates that we do not use the proposed optimization method.
The dual parameter $\lambda$ in \Cref{eq:primal-solution} is then viewed as a Lagrangian multiplier and set empirically. The results demonstrate the effectiveness of our algorithm in reducing static bias.
Considering the trade-off between performance and efficiency, we use $m=2$ in the following experiments.
\begin{figure*}[t!]
    \centering
    \includegraphics[width=\linewidth]{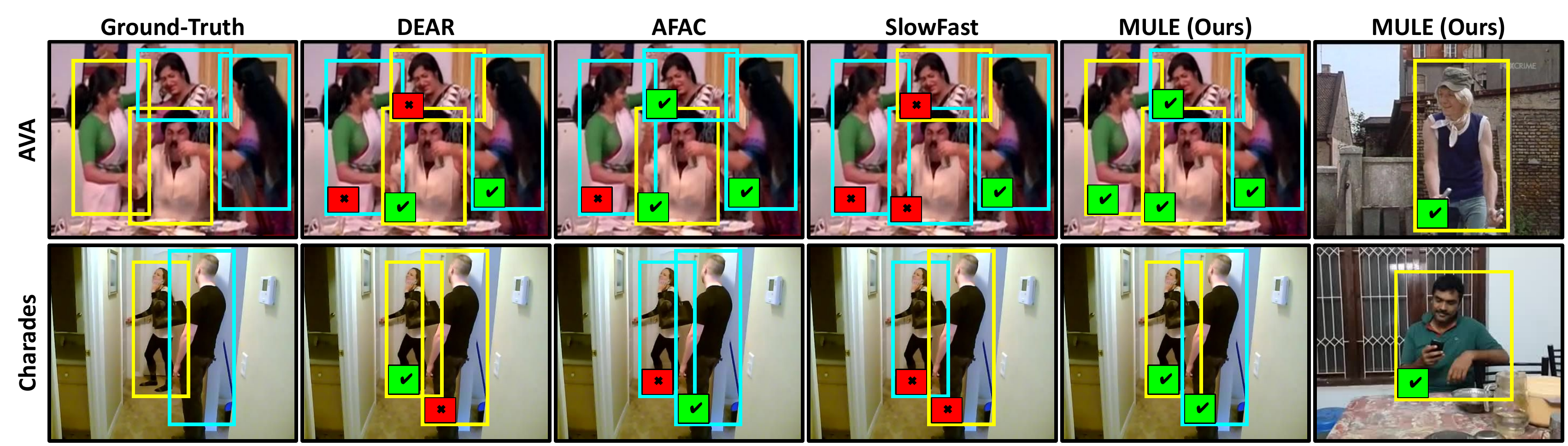}
    \vspace{-7mm}
    \caption{Visual comparison with our method and state-of-the-art on AVA~\cite{gu-2018-cvpr-ava} and Charades~\cite{sigurdsson2016hollywood_Charades}. \textcolor{Cyan}{Cyan} and (\textcolor{Goldenrod!80!black}{yellow}) boxes denote the predictions of actors with known and novel actions, respectively. \textcolor{Green}{\cmark ~marks} and \textcolor{Red}{\xmark~marks} indicate correct and false predictions, respectively.}
    \label{fig:visualization}
    \vspace{-2mm}
\end{figure*}

\textbf{Contribution of important components.}
In Table \ref{tab:ablation-components}, we discuss the contribution of each component of \sysname{} on both open- and closed-set settings as follows.
\begin{itemize}[leftmargin=*,topsep=0pt,itemsep=0ex,partopsep=0ex,parsep=0ex]
    \item \textbf{\sysnameacor{}.} Without \sysnameacor{} representation, we only rely on the video backbone (\ie, SlowFast~\cite{DBLP:conf/iccv/Feichtenhofer0M19}) and a single layer action classifier to predict actions in the video. The results show a significant performance drop in both closed set mAP ($25.12\%$ vs. $27.80\%$) and open set metrics, demonstrating the importance of \sysnameacor{} representation.
    \item \textbf{\sysnamemenn{}.} If we use Beta loss in \Cref{eq:beta-loss} in the network, the open set recognition performance is improved with a slight sacrifice of the closed set mAP ($28.81\%$ vs. $27.80\%$). This is because each class follows a binomial opinion whose conjugate prior is a Beta distribution to benefit open set recognition, rather than a Dirichlet distribution aiming to optimize multi-class classification in a closed-set setting. Although using the Beta loss does not outperform using cross-entropy in a closed-set setting, its performance is still competitive. 
    \item \textbf{\sysnameedc{}.} If we remove the evidence debiasing constraint in network training, our method will suffer from static bias, as explained in \Cref{sec:debias}. The results show that using our \sysnameedc{} module in training brings considerable improvement on all four novelty scores, showing the effectiveness to model static bias. Empirically, we set $\gamma=0.001$ in \Cref{eq:opt-problem}.
\end{itemize}
\textbf{Comparison with novelty estimation mechanisms.} 
As presented in \Cref{tab:ablation-k_iter,tab:ablation-components}, we compare four novelty estimation mechanisms in \Cref{sec:novelty_estimation}. It can be concluded that uncertainty-based scores perform better than belief based one. 
According to \Cref{eq:element-wise-sl}, a belief based score is calculated based on negative evidence. 
We speculate that it is more intuitive and accurate to estimate if an actor performs the same action rather than different actions. 
In summary, positive evidence is more reliable than negative evidence. In the following, we only report Positive Evidence (PE) scores to detect novel actions for clarity.
\begin{table}[t]
\scriptsize
\rowcolors{3}{white}{gray!15}
    \centering
    \setlength\tabcolsep{1.2pt}
    \begin{tabular}{l|c|c|c|c|c|c}
        \toprule  
        \rowcolor{LightBlue} \textbf{Methods} & \textbf{Pre-train} & \textbf{Error}$\downarrow$ & \textbf{AUROC}$\uparrow$ & \textbf{AUPR}$\uparrow$ & 
        \textbf{FPR at}$\downarrow$ & \textbf{Closed Set}\\
        \rowcolor{LightBlue} &&&&&\textbf{95\% TPR}& \textbf{mAP}$\uparrow$\\
        \cmidrule(r){1-1} \cmidrule(lr){2-2} \cmidrule(lr){3-3} \cmidrule(lr){4-4} \cmidrule(lr){5-5} \cmidrule(l){6-6} \cmidrule(l){7-7}
        Slowfast, R-101~\cite{DBLP:conf/iccv/Feichtenhofer0M19} & K600 & 60.12 & 50.15 & 70.15 & 20.17 & 29.00\\
        ACAR, R-50~\cite{pan2020actorcontextactor} & K400 & 35.16 & 52.89 & 79.15 & 14.16 & 28.84\\
        ACAR, R-101~\cite{pan2020actorcontextactor} & K700 & 32.26 & 55.18 & 82.15 & 10.16 & \textbf{33.30}\\
        AFAC, R-101~\cite{zhang2021multi}& K600 & 53.14 & 79.69 & 90.79 & 7.15 & 30.20\\ 
        AIA, R-101~\cite{tang2020asynchronous} & K700 & 35.14 & 54.17 & 78.49 & 10.15 & 32.30\\
        DEAR, R-50~\cite{bao-2021-ICCV}& K400 & 23.22 & 82.12 & 83.15 & 8.45 & 18.51\\
        \rowcolor{LightGreen}
        \midrule
        \sysname{} (Ours), R-50 & K400 & 11.22 & 86.92 & 96.18 & 4.98 & 27.80\\ \rowcolor{LightGreen}
        \sysname{} (Ours), R-101 & K700 & \textbf{10.12} & \textbf{88.75} & \textbf{98.18} & \textbf{4.17} & 29.87\\
        \bottomrule
    \end{tabular}
    \vspace{-3mm}
    \caption{Comparison with state-of-the-art on AVA~\cite{gu-2018-cvpr-ava}. Ours is highlighted in \textcolor{Green}{green}. Best value is in \textbf{bold}.}
    \vspace{-3mm}
    \label{tab:baseline_AVA}
\end{table}

\subsection{Results Analysis}
For a fair comparison with previous single-actor/single-action methods, we enhance them to work within the new multi-actor/multi-action paradigm.
Specifically, we calculate positive evidence by the function $\text{ReLU}(h(\mathbf{x};\boldsymbol{\theta}))+1$ (same as described in \Cref{sec:edl}), based on the last layer outputs $h(\mathbf{x};\boldsymbol{\theta})$. 
We apply single-actor based DEAR~\cite{bao-2021-ICCV} to a bounding volume around each detected actor for a multi-actor setting.

According to \Cref{tab:baseline_AVA}, our method is compared with existing methods with both ResNet-50 and ResNet-101 backbones on AVA~\cite{gu-2018-cvpr-ava}.
The baseline SlowFast~\cite{DBLP:conf/iccv/Feichtenhofer0M19} achieves a reasonable closed set mAP score but fails in open set recognition.
Other state-of-the-art methods including ACAR~\cite{pan2020actorcontextactor}, AFAC~\cite{zhang2021multi}, and AIA~\cite{tang2020asynchronous} obtain better accuracy in the closed set but are still unsatisfying to detect novel actions.
Compared with DEAR~\cite{bao-2021-ICCV} using ENNs, our method achieves much better performance in both closed set and open set metrics. This is because our network can handle the situation that each actor may contain more than one action. 
In addition, simple bounding volume strategy is not effective to extract global information in the multi-actor setting.
It indicates the effectiveness of the proposed \sysnamemenn{} for multi-label open set action recognition. 

Furthermore, we observe a similar trend on Charades~\cite{sigurdsson2016hollywood_Charades}.
From \Cref{tab:baseline_Charades}, our \sysname{} achieves comparable closed set mAP and much better open set performance compared with the state-of-the-art. 
It is worth mentioning that AFAC~\cite{zhang2021multi} and CSN~\cite{tran2019video} achieve the best closed set accuracy by using powerful CSN-152 backbone \cite{DBLP:conf/cvpr/ZhangLM21} and IG-65M pre-trained dataset. However, our method using ResNet backbone still obtains a considerable gain in terms of AUROC and FPR at 95\% TPR. 
It is a good fit for the situation that requires lower probability of misclassifying a novel instance as known.

As shown in \Cref{fig:visualization}, we provide several visual results on two datasets, where our method is compared with DEAR~\cite{bao-2021-ICCV}, AFAC~\cite{zhang2021multi} and SlowFast~\cite{DBLP:conf/iccv/Feichtenhofer0M19}. 
The examples show that other methods output several false predictions of known and novel actions in a multi-actor/multi-action setting.
In contrast, our method can handle novelty detection in different situations more accurately.
\begin{table}[t]
\scriptsize
\rowcolors{3}{white}{gray!15}
    \centering
    \setlength\tabcolsep{1.2pt}
    \begin{tabular}{l|c|c|c|c|c|c}
        \toprule  
        \rowcolor{LightBlue} \textbf{Methods} & \textbf{Pre-train} & \textbf{Error}$\downarrow$ & \textbf{AUROC}$\uparrow$ & \textbf{AUPR}$\uparrow$ & 
        \textbf{FPR at}$\downarrow$ & \textbf{Closed Set}\\
        \rowcolor{LightBlue} &&&&&\textbf{95\% TPR}& \textbf{mAP}$\uparrow$\\
        \cmidrule(r){1-1} \cmidrule(lr){2-2} \cmidrule(lr){3-3} \cmidrule(lr){4-4} \cmidrule(lr){5-5} \cmidrule(l){6-6} \cmidrule(l){7-7}
        Slowfast, R-101~\cite{DBLP:conf/iccv/Feichtenhofer0M19} & K600 & 25.15 & 79.12 & 79.15 & 50.15 & 45.20\\
        X3D-XL~\cite{feichtenhofer2020x3d} & K600 & 8.45 & 82.15 & 86.49 & 39.98 & 47.20\\ 
        AFAC, R-101~\cite{zhang2021multi} & K600 & 7.00 & 82.20 & 90.15 & 35.19 & 48.10 \\
        AFAC, CSN-152~\cite{zhang2021multi} & IG-65M & 6.18 & 80.12 & 90.79 & 30.15 & \textbf{50.30}\\
        CSN, CSN-152~\cite{tran2019video} & IG-65M & 6.89 & 80.15 & \textbf{92.61} & 35.16  & 46.40\\
        DEAR, R-50~\cite{bao-2021-ICCV}& K400 & 12.15 & 86.15 & 92.35 & 29.96 & 38.12\\
        \rowcolor{LightGreen}
        \midrule
        \sysname{} (Ours), R-50 & K400 & \textbf{6.15} & 85.49 & 90.78 & 25.98 & 45.33\\ \rowcolor{LightGreen}
        \sysname{} (Ours), R-101 & K700 & 6.23 & \textbf{88.49} & 91.15 & \textbf{25.15} & 47.21\\
        \bottomrule
    \end{tabular}
    \vspace{-3mm}
    \caption{Comparison with state-of-the-art on Charades~\cite{sigurdsson2016hollywood_Charades}. Ours is highlighted in \textcolor{Green}{green}. Best value is in \textbf{bold}.}
    \vspace{-3mm}
    \label{tab:baseline_Charades}
\end{table}

\section{Conclusion}
    \label{sec:conclusion}
    In this paper, we address the general problem of detecting the novelty of each actor's action(s) in video for the first time. 
To this end, we develop a new open set action recognition framework using Multi-Label Evidential Learning (\sysname{}).
Based on actor-context-object relation representation, the proposed Beta Evidential Neural Network can formulate Beta distributions for single/multi-actor, single/multi-action settings.
To optimize the network, we add the multi-label evidence debiasing constraint and propose the primal-dual average scheme update learning method with theoretical analysis.
Thus both direct and indirect dependencies of action predictions are reduced for more robust results.
The experiments show that Beta distribution more accurately classifies novel action(s) than the previous Dirichlet distribution.

{\small
\bibliographystyle{ieee_fullname}
\bibliography{egbib}
}

\clearpage
\appendix
\section{Notations}
\label{sec:notations}
Vectors are denoted by lower case bold face letters, \textit{e.g.,} positive and negative evidence $\boldsymbol{\alpha,\beta}\in\mathbb{R}^k$. Vectors with subscripts of indices, such as $\alpha_i, \beta_i$ indicate the $i$-th entry in $\boldsymbol{\alpha,\beta}$. The Euclidean $\ell_2$-norm is denoted as $\lVert\cdot\rVert$. 
Scalars are denoted by lowercase italic letters, \textit{e.g.,} $\eta>0$. Matrices are denoted by capital italic letters, \textit{e.g.,} $X\in\mathbb{R}^{C\times H\times W}$. Proposition
$[u]_+$ denotes the  projection of $[u]$ on the nonnegative orthant.
\begin{table}[h]
    \centering
    \rowcolors{2}{white}{gray!15}
    \begin{tabular}{p{0.18\linewidth} p{0.75\linewidth}}
        \toprule\rowcolor{LightBlue} 
        \textbf{Notations} & \textbf{Descriptions} \\
        \midrule
        $N$ & Number of detected persons\\
        $M$ & Number of detected objects\\
        $\boldsymbol{\theta}$ & Parameters of the backbone to calculate \sysnameacor\\
        $\mathbf{x}$ & Input video\\
        $X$ & Context feature\\
        $C, H, W$ & Channel, height and width\\
        $A, O$ & Actor (person) and object feature\\
        $F$ & Actor-context-object relation feature\\
        $H$ & Actor relational feature\\
        $\omega$ & Subjective opinion\\
        $b,d$ & Belief and disbelief masses\\
        $u$ & Uncertainty mass\\
        $W$ & Non-informative prior weight\\
        $a$ & Base rate\\
        $K$ & Total number of classes\\
        $\alpha, \beta$ & Positive and negative evidence\\
        $p$ & Class probability\\
        $\gamma$ & The independence criterion relaxation of \sysnameedc{}\\
        $m$  & primal-dual updating step\\
        \bottomrule
    \end{tabular}
    \caption{Important notations and corresponding descriptions.}
    \label{tab:notations}
\vspace{-5mm}
\end{table}

\begin{table*}[t]
\scriptsize
    \centering
    \rowcolors{3}{white}{gray!15}
    \begin{tabular}{c|l|c|c|c|c}
        
        \toprule\rowcolor{LightBlue} 
        \textbf{m} & & \multicolumn{4}{c}{\textbf{PE} / \textbf{NE} / \textbf{PNE} / \textbf{Belief}}\\ \rowcolor{LightBlue} 
        \textbf{iterations} & \multirow{-2}{*}{\textbf{Method}} & \textbf{Error} $\downarrow$ & \textbf{AUROC} $\uparrow$ & \textbf{AUPR} $\uparrow$ & \textbf{FPR at 95\% TPR} $\downarrow$ \\
        \cmidrule(r){1-1} \cmidrule(lr){2-2} \cmidrule(lr){3-3} \cmidrule(lr){4-4} \cmidrule(l){5-5} \cmidrule(l){6-6}

        \rowcolor{LightGreen} &\sysname{} & 11.22 / 40.14 / 12.18 / 27.91 & 86.92 / 63.54 / 85.25 / 83.41 & 96.18 / 89.90 / 93.46 / 90.90 & 4.98 / 5.09 / 4.51 / 9.15\\
        \cmidrule{2-6}
        &w/o \sysnameacor{} & 50.12 / 50.04 / 12.45 / 39.84 & 51.23 / 51.12 / 84.88 / 82.41  & 86.11 / 86.36 / 88.13 / 92.35 & 15.23 / 15.56 / 34.44 / 9.57\\
        &w/o \sysnamemenn{} & 35.12 / 35.31 / 35.12 / 34.78 & 57.10 / 57.59 / 57.88 / 57.01 & 85.71 / 85.65 / 85.66 / 85.66 & 12.03 / 13.00 / 14.98 / 15.32\\
        \multirow{-5}{*}{\rotatebox[origin=c]{90}{$\mathbf{m=2}$}}  &w/o \sysnameedc{} & 13.16 / 46.32 / 12.16 / 38.23 & 86.00 / 50.13 / 85.12 / 83.05 & 95.12 / 90.67 / 92.45 / 89.19 & 6.31 / 5.05 / 5.01 / 9.59\\
        \bottomrule
        
        \rowcolor{LightGreen} &  \sysname{} & 11.01 / 45.05 / 11.17 / 27.86 & 86.81 / 58.23 / 85.25 / 85.18 & 98.48 / 95.65 / 99.41 / 90.44 & 4.58 / 5.00 / 3.98 / 9.09 \\
        \cmidrule{2-6}
        & w/o \sysnameacor{} & 48.34 / 50.32 / 17.77 / 47.79 & 52.78 / 50.15 / 83.34 / 79.26 & 86.39 / 83.39 / 93.93 / 88.93 & 13.35 / 15.01 / 25.93 / 10.00\\
        & w/o \sysnamemenn{} & 32.64 / 45.12 / 50.14 / 21.71 & 61.41 / 55.23 / 80.12 / 83.23 & 83.67 / 83.39 / 84.03 / 88.66 & 12.21 / 15.23 / 12.24 / 22.22 \\
        \multirow{-5}{*}{\rotatebox[origin=c]{90}{$\mathbf{m=3}$}} & w/o \sysnameedc{} & 10.12 / 48.12 / 11.86 / 38.43 & 80.12 / 52.01 / 84.38 / 73.94 & 93.34 / 93.41 / 98.92 / 89.33 & 5.15 / 6.03 / 7.41 / 10.70 \\
    
        \bottomrule
        
        \rowcolor{LightGreen} & \sysname{} & 11.28 / 47.12 / 11.17 / 27.52 & 86.56 / 59.01 / 85.13 / 84.72 & 99.43 / 88.98 / 99.40 / 90.42 & 4.64 / 5.02 / 3.21 / 9.09 \\
        \cmidrule{2-6}
        & w/o \sysnameacor{} & 50.21 / 51.71 / 17.64 / 49.35 & 52.35 / 50.01 / 84.17 / 80.33 & 83.36 / 83.49 / 94.97 / 88.21 & 11.02 / 12.65 / 33.33 / 10.23\\
        & w/o \sysnamemenn{} & 42.04 / 50.15 / 48.14 / 29.21 & 83.01 / 51.36 / 89.33 / 59.63 & 87.21 / 93.39 / 84.29 / 86.48 & 10.01 / 12.12 / 14.00 / 9.70\\
        \multirow{-5}{*}{\rotatebox[origin=c]{90}{$\mathbf{m=4}$}} & w/o \sysnameedc{} & 13.35 / 48.01 / 10.86 / 34.28 & 82.51 / 50.21 / 85.51 / 78.26 & 93.34 / 93.93 / 98.94 / 89.17 & 5.69 / 6.16 / 3.73 / 9.62\\
        
        \bottomrule
        
        \rowcolor{LightGreen} & \sysname{} & 10.22 / 44.32 / 11.15 / 27.49 & 86.86 / 58.87 / 85.30 / 84.66 & 99.52 / 96.35 / 99.41 / 90.41 & 4.32 / 5.00 / 3.09 / 9.09 \\
        \cmidrule{2-6}
        & w/o \sysnameacor{} & 49.61 / 50.00 / 16.61 / 48.82 & 51.01 / 51.97 / 84.78 / 81.29 & 82.16 / 83.33 / 98.05 / 88.09 & 10.23 / 12.68 / 37.04 / 9.16\\
        & w/o \sysnamemenn{} & 41.34 / 49.67 / 49.21 / 10.75 & 84.83 / 52.41 / 82.91 / 90.44 & 86.13 / 93.40 / 87.27 / 90.29 & 9.97 / 11.98 / 12.68 / 13.01\\
        \multirow{-5}{*}{\rotatebox[origin=c]{90}{$\mathbf{m=5}$}}& w/o \sysnameedc{} & 11.09 / 48.01 / 10.86 / 32.14 & 79.34 / 50.98 / 85.12 / 75.23 & 93.33 / 94.02 / 98.94 / 91.14 & 5.39 / 5.79 / 3.70 / 7.11\\
         
        \bottomrule
    \end{tabular}
    \caption{Exploration of different component in \sysname{} with $m=2,3,4,5$.}
    \label{tab:ablation-compinent_m345}
\end{table*}

\begin{table*}[t]
\scriptsize
\rowcolors{3}{white}{gray!15}
    \centering
    \setlength\tabcolsep{3.5pt}
    \begin{tabular}{l|c|c|c|c|c|c}
        \toprule  \rowcolor{LightBlue} 
        &  & \multicolumn{4}{c}{\textbf{PE} / \textbf{NE} / \textbf{PNE} / \textbf{Belief}} & \textbf{Closed Set}\\
         \rowcolor{LightBlue} \multirow{-2}{*}{\textbf{Backbone}} & \multirow{-2}{*}{\textbf{Pre-train}}  & \textbf{Error} $\downarrow$ & \textbf{AUROC} $\uparrow$ & \textbf{AUPR} $\uparrow$ & \textbf{FPR at 95\% TPR} $\downarrow$ & \textbf{mAP} $\uparrow$ \\
        \cmidrule(r){1-1} \cmidrule(lr){2-2} \cmidrule(lr){3-3} \cmidrule(lr){4-4} \cmidrule(lr){5-5} \cmidrule(l){6-6} \cmidrule(l){7-7}
        ACAR, R-50~\cite{pan2020actorcontextactor} & Kinetics-400 & 35.16 / 35.56 / 23.12 / 30.15 & 52.89 / 58.48 / 60.16 / 57.17 & 79.15 / 80.17 / 83.48 / 81.48 & 14.16 / 15.49 / 20.19 / 11.15 & 28.84\\
        ACAR, R-101~\cite{pan2020actorcontextactor} & Kinetics-700 & 32.26 / \textbf{34.12} / 20.54 / 30.16 & 55.18 / 58.98 / 63.18 / 60.18 & 82.15 / 81.48 / 85.48 / 90.01 & 10.16 / 15.01 / 19.48 / 10.00 & \textbf{33.30}\\
        AIA, R-101~\cite{tang2020asynchronous} & Kinetics-700 & 35.14 / 34.56 / 25.01 / 32.14 & 54.17 / 59.48 / 59.78 / 59.69 & 78.49 / 80.17 / 86.15 / \textbf{91.48} & 10.15 / 13.98 / 18.48 / 9.68 & 32.30\\
        Slowfast, R-101~\cite{DBLP:conf/iccv/Feichtenhofer0M19} & Kinetics-600 & 60.12 / 59.12 / 23.15 / 40.15 & 50.15 / 52.23 / 69.14 / 68.15 & 70.15 / 75.15 / 69.68 / 68.21 & 20.17 / 19.56 / 23.12 / 26.15 & 29.00\\
        DEAR, R-50~\cite{bao-2021-ICCV}& Kinetics-400 & 23.22 / 42.15 / 23.15 / 30.19 & 82.12 / 60.12 / 80.48 / 83.59 & 83.15 / 88.14 / 90.15 / 85.49 & 8.45 / 8.48 / 6.30 / 13.15 & 18.51\\
        AFAC, R-101~\cite{zhang2021multi}& Kinetics-600 & 53.14 / 49.41 / 40.15 / 59.17 & 79.69 / \textbf{65.89} / 80.48 / 79.79 & 90.79 / 89.18 / 88.18 / 85.15 & 7.15 / 8.79 / \textbf{4.01} / \textbf{8.98} & 30.20\\ \rowcolor{LightGreen}
        \midrule
        \textbf{Ours}, R-50 & Kinetics-400 & 11.22 / 40.14 / 12.18 / 27.91 & 86.92 / 63.54 / 85.25 / 83.41 & 96.18 / 89.90 / 93.46 / 90.90 & 4.98 / \textbf{5.09} / 4.51 / 9.15 & 27.80\\ \rowcolor{LightGreen}
        \textbf{Ours}, R-101 & Kinetics-700 & \textbf{10.12} / 40.15 / \textbf{10.56} / \textbf{25.02} & \textbf{88.75} / 65.36 / \textbf{89.48} / \textbf{84.26} & \textbf{98.18} / \textbf{89.95} / \textbf{94.74} / 90.49 & \textbf{4.17} / 5.28 / 4.25 / 10.01 & 29.87\\
        \bottomrule
    \end{tabular}
    \caption{Comparison with state-of-the-art on AVA~\cite{gu-2018-cvpr-ava}. Ours is highlighted in \textcolor{Green}{green}. Best value is in \textbf{bold}.}
    \label{tab:baseline_AVA_full}
\end{table*}

\begin{table*}[t]
\scriptsize
\rowcolors{3}{white}{gray!15}
    \centering
    \setlength\tabcolsep{3.3pt}
    \begin{tabular}{l|c|c|c|c|c|c}
        \toprule  \rowcolor{LightBlue} 
        &  & \multicolumn{4}{c}{\textbf{PE} / \textbf{NE} / \textbf{PNE} / \textbf{Belief}} & \textbf{Closed Set}\\
         \rowcolor{LightBlue} \multirow{-2}{*}{\textbf{Backbone}} & \multirow{-2}{*}{\textbf{Pre-train}}  & \textbf{Error} $\downarrow$ & \textbf{AUROC} $\uparrow$ & \textbf{AUPR} $\uparrow$ & \textbf{FPR at 95\% TPR} $\downarrow$ & \textbf{mAP} $\uparrow$\\
        \cmidrule(r){1-1} \cmidrule(lr){2-2} \cmidrule(lr){3-3} \cmidrule(lr){4-4} \cmidrule(lr){5-5} \cmidrule(l){6-6} \cmidrule(l){7-7}
        AFAC, R-101~\cite{zhang2021multi} & Kinetics-600 & 7.00 / 26.62 / 8.88 / 19.17 & 82.20 / 69.62 / 82.15 / 87.00 & 90.15 / 85.12 / 90.15 / 89.15 & 35.19 / 29.55 / \textbf{23.15} / 31.15 & 48.10 \\
        AFAC, CSN-152~\cite{zhang2021multi} & IG-65M & 6.18 / 19.78 / 8.49 / \textbf{15.59} & 80.12 / 61.02 / \textbf{90.48} / 84.88 & 90.79 / 86.36 / 91.15 / 85.67 & 30.15 / 33.15 / 28.42 / 30.01 & \textbf{50.30}\\
        CSN, CSN-152~\cite{tran2019video} & IG-65M & 6.89 / 25.18 / 9.02 / 18.90 & 80.15 / 70.15 / 90.15 / 87.01 & \textbf{92.61} / 90.17 / 90.99 / 85.15 & 35.16 / 30.15 / 35.98 / 26.69 & 46.40\\
        Slowfast, R-101~\cite{DBLP:conf/iccv/Feichtenhofer0M19} & Kinetics-600 & 25.15 / 30.15 / 46.12 / 23.00 & 79.12 / 75.12 / 75.36 / 78.89 & 79.15 / 80.15 / 80.46 / 80.08 & 50.15 / 56.15 / 46.12 / 29.16 & 45.20\\
        DEAR, R-50~\cite{bao-2021-ICCV}& Kinetics-400 & 12.15 / 25.59 / 11.11 / 16.98 & 86.15 / \textbf{79.15} / 82.46 / 82.55 & 92.35 / 90.33 / 89.26 / 90.19 & 29.96 / 26.16 / 23.99 / 28.98 & 38.12\\
        X3D-XL~\cite{feichtenhofer2020x3d} & Kinetics-600 & 8.45 / 25.85 / 10.15 / 15.51 & 82.15 / 66.64 / 85.00 / 82.15 & 86.49 / 80.16 / 90.15 / 88.15 & 39.98 / 36.14 / 26.15 / 30.55 & 47.20\\ \rowcolor{LightGreen}
        \midrule
        \textbf{Ours}, R-50 & Kinetics-400 & \textbf{6.15} / 23.15 / 7.89 / 17.11 & 85.49 / 65.17 / 88.79 / \textbf{88.49} & 90.78 / \textbf{92.78} / 94.15 / \textbf{95.03} & 25.98 / 25.29 / 25.49 / \textbf{23.64} & 45.33\\ \rowcolor{LightGreen}
        \textbf{Ours}, R-101 & Kinetics-700 & 6.23 / \textbf{22.15} / \textbf{7.02} / 20.15 & \textbf{88.49} / 65.01 / 89.41 / 82.16 & 91.15 / 90.78 / \textbf{95.42} / 90.15 & \textbf{25.15} / \textbf{21.16} / 26.46 / 28.98 & 47.21\\
        \bottomrule
    \end{tabular}
    \caption{Comparison with state-of-the-art on Charades~\cite{sigurdsson2016hollywood_Charades}. Ours is highlighted in \textcolor{Green}{green}. Best value is in \textbf{bold}.}
    \label{tab:baseline_Charades_full}
\end{table*} 
   
\section{Proof Sketch of \cref{prop1}}
\label{sec:proof of prop1}
 
\noindent \textbf{Proposition 1.} \textit{(Convergence of Averaged Primal Sequence) Under \cref{assump1}, when the convex set $\Theta$ is compact, let the approximate primal sequence $\{\Tilde{\boldsymbol{\theta}}^{(m)}\}^{\infty}_{m=1}$ be the running averages of the primal iterates given in \Cref{eq:primal-avg}. Then $\{\Tilde{\boldsymbol{\theta}}^{(m)}\}^{\infty}_{m=1}$ can converge to its limit $\Tilde{\boldsymbol{\theta}}^*$.}

To prove the convergence in \cref{prop1}, we first prove the below \cref{lemma:cauchy} that
\begin{lemma}
\label{lemma:cauchy}
The approximate primal sequence $\{\Tilde{\boldsymbol{\theta}}^{(m)}\}^{\infty}_{m=1}$ given in \Cref{eq:primal-avg} is a Cauchy sequence. That is $\forall \epsilon>0$, there is a $Q\in\mathbb{N}$ such that $||\Tilde{\boldsymbol{\theta}}^{(m')}-\Tilde{\boldsymbol{\theta}}^{(m)}||\leq\epsilon$, $\forall m',m\geq Q$.
\end{lemma}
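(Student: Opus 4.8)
The plan is to verify the Cauchy estimate directly from the recursive structure of the running averages in \Cref{eq:primal-avg}, using \Cref{assump1} to control magnitudes. First I would extract two consequences of compactness: since $\Theta$ is bounded there is a constant $R>0$ with $\lVert\boldsymbol{\theta}^{(i)}\rVert\le R$ for every iterate, and since $\Theta$ is closed and convex each average $\Tilde{\boldsymbol{\theta}}^{(m)}$, being a nonnegatively weighted combination of the $\boldsymbol{\theta}^{(i)}$ shrunk toward the origin, also obeys $\lVert\Tilde{\boldsymbol{\theta}}^{(m)}\rVert\le R$. Hence $\{\Tilde{\boldsymbol{\theta}}^{(m)}\}$ is a bounded sequence in a finite-dimensional Euclidean space, which is complete, so establishing the Cauchy property of \Cref{lemma:cauchy} will at the same time deliver the limit $\Tilde{\boldsymbol{\theta}}^*\in\Theta$ claimed in \Cref{prop1}.

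The key computation I would carry out is the difference of two consecutive averages. Substituting \Cref{eq:primal-avg} and shifting the summation index gives the identity
\[
    \Tilde{\boldsymbol{\theta}}^{(m+1)}-\Tilde{\boldsymbol{\theta}}^{(m)}=\frac{1}{m+1}\Big(\boldsymbol{\theta}^{(m)}-\Tilde{\boldsymbol{\theta}}^{(m)}\Big),
\]
so that $\lVert\Tilde{\boldsymbol{\theta}}^{(m+1)}-\Tilde{\boldsymbol{\theta}}^{(m)}\rVert\le \tfrac{1}{m+1}\lVert\boldsymbol{\theta}^{(m)}-\Tilde{\boldsymbol{\theta}}^{(m)}\rVert\le \tfrac{2R}{m+1}$. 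For arbitrary $m'>m\ge Q$ I would telescope and apply the triangle inequality,
\[
    \big\lVert\Tilde{\boldsymbol{\theta}}^{(m')}-\Tilde{\boldsymbol{\theta}}^{(m)}\big\rVert\le \sum_{k=m}^{m'-1}\frac{1}{k+1}\,\big\lVert\boldsymbol{\theta}^{(k)}-\Tilde{\boldsymbol{\theta}}^{(k)}\big\rVert .
\]
The goal is then to make the right-hand side smaller than any prescribed $\epsilon$ once $Q$ is large.

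The main obstacle is exactly this tail sum. The weights $\tfrac{1}{k+1}$ are only harmonically summable, so the crude bound $\lVert\boldsymbol{\theta}^{(k)}-\Tilde{\boldsymbol{\theta}}^{(k)}\rVert\le 2R$ yields merely $\big\lVert\Tilde{\boldsymbol{\theta}}^{(m')}-\Tilde{\boldsymbol{\theta}}^{(m)}\big\rVert\lesssim 2R\ln(m'/m)$, which does not vanish when $m'/m$ is large; boundedness of $\Theta$ alone is therefore insufficient, and in fact Ces\`aro averages of a merely bounded sequence need not be Cauchy. To close the gap I would reduce the claim to the summability condition
\[
    \sum_{k\ge 1}\frac{1}{k+1}\,\big\lVert\boldsymbol{\theta}^{(k)}-\Tilde{\boldsymbol{\theta}}^{(k)}\big\rVert<\infty,
\]
since tails of a convergent series vanish. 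Establishing this is where the optimization dynamics, rather than mere compactness, must enter: the primal step in \Cref{eq:primal-solution} minimizes the Lagrangian regularized by the concave term $-\tfrac{\delta}{2}\lambda^2$, the dual recursion in \Cref{eq:dual-solution} damps the multiplier through the same $\delta$, and convexity of $\mathcal{L}$ and $g$ from \Cref{assump1} together force a decay rate on the residual $r_k:=\lVert\boldsymbol{\theta}^{(k)}-\Tilde{\boldsymbol{\theta}}^{(k)}\rVert$. Invoking the dual-subgradient averaging analysis underlying the per-iteration bounds of \Cref{prop:prop2}, I expect a rate of order $r_k=O(k^{-1/2})$, which makes $\sum_k r_k/(k+1)$ converge and hence the tail go to zero.

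Finally I would assemble the $\epsilon$--$Q$ statement: given $\epsilon>0$, summability supplies a $Q$ with $\sum_{k\ge Q}\tfrac{1}{k+1}r_k\le\epsilon$, whence $\big\lVert\Tilde{\boldsymbol{\theta}}^{(m')}-\Tilde{\boldsymbol{\theta}}^{(m)}\big\rVert\le\epsilon$ for all $m',m\ge Q$, which is precisely \Cref{lemma:cauchy}; completeness of the ambient space and closedness of $\Theta$ then promote this to convergence toward $\Tilde{\boldsymbol{\theta}}^*\in\Theta$. The step I would treat most carefully is the residual decay $r_k\to 0$ with a summable weighted rate, because without a quantitative rate the harmonic weights defeat the telescoping and the averaged sequence could fail to be Cauchy.
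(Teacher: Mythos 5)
Your proposal reproduces the paper's own opening moves exactly: the identity $\Tilde{\boldsymbol{\theta}}^{(m+1)}-\Tilde{\boldsymbol{\theta}}^{(m)}=\frac{1}{m+1}\big(\boldsymbol{\theta}^{(m)}-\Tilde{\boldsymbol{\theta}}^{(m)}\big)$, the telescoping over $k=m,\dots,m'-1$, and the uniform bound on each residual coming from compactness of $\Theta$. The paper stops there: it bounds the telescoped sum by $\frac{2G(m'-m)}{m+1}$ and asserts this can be made smaller than any $\epsilon$. You correctly observe that this is precisely where the argument is delicate --- the harmonic weights give only something of order $2G\ln(m'/m)$, which does not vanish when $m'$ is allowed to be arbitrarily larger than $m$, and Ces\`aro averages of a merely bounded sequence need not be Cauchy (iterates alternating between two fixed points on exponentially growing blocks give divergent averages). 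Your diagnosis of the obstacle is sound, and it applies verbatim to the paper's own proof, which requires $\frac{2G(m'-m)}{m+1}<\epsilon$ as a side condition rather than deriving it from $m,m'\ge Q$.

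The genuine gap is in your proposed repair. You reduce the claim to the summability $\sum_k \frac{1}{k+1}\lVert\boldsymbol{\theta}^{(k)}-\Tilde{\boldsymbol{\theta}}^{(k)}\rVert<\infty$ and then assert that the primal--dual dynamics ``force'' a residual decay $r_k=O(k^{-1/2})$, but you never derive this rate, and nothing in \Cref{assump1} or in the updates \Cref{eq:primal-solution,eq:dual-solution} as stated obviously yields it. \Cref{prop:prop2} controls function values of $\mathcal{L}$ and violations of $g$, not the distance between an iterate and its running average, and extracting iterate-level contraction would require additional structure (strong convexity, diminishing steps in the Adam realization of \Cref{alg:algor}, or an a priori convergence of $\{\boldsymbol{\theta}^{(m)}\}$ itself) that neither you nor the paper assumes. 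Until $r_k$ is shown to decay at a rate that survives the harmonic weighting, what you (and the paper) have actually established is only that consecutive averages differ by $O(1/m)$ --- necessary for the Cauchy property, but not sufficient. The step you flagged as the one to ``treat most carefully'' is exactly the step that remains unproven.
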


\begin{proof}
Given \Cref{eq:primal-avg}, we derive
\begin{align*}
    \Tilde{\boldsymbol{\theta}}^{(m+1)} 
    &= \frac{1}{m+1} \sum_{i=1}^{m} \boldsymbol{\theta}^{(i)} \\
    &= \frac{1}{m+1} \big(\boldsymbol{\theta}^{(m)}+\sum_{i=1}^{m-1} \boldsymbol{\theta}^{(i)} \big) \\
    &= \frac{1}{m+1} \boldsymbol{\theta}^{(m)} + \frac{m}{m+1}\cdot\frac{1}{m}\sum_{i=1}^{m-1} \boldsymbol{\theta}^{(i)}\\
    &= \frac{1}{m+1} \boldsymbol{\theta}^{(m)} + \frac{m}{m+1}\Tilde{\boldsymbol{\theta}}^{(m)} \\
    &= \frac{1}{m+1} \boldsymbol{\theta}^{(m)} + \Tilde{\boldsymbol{\theta}}^{(m)} -\frac{1}{m+1} \Tilde{\boldsymbol{\theta}}^{(m)}\\
    \Tilde{\boldsymbol{\theta}}^{(m+1)}-\Tilde{\boldsymbol{\theta}}^{(m)} &= \frac{1}{m+1}(\boldsymbol{\theta}^{(m)}-\Tilde{\boldsymbol{\theta}}^{(m)}).
\end{align*}
Under \cref{assump1}, $\Theta$ is a compact convex set and $\Tilde{\boldsymbol{\theta}}^{(m)},\boldsymbol{\theta}^{(m)} \in\Theta$. Let $m'>m$ and $||\Tilde{\boldsymbol{\theta}}^{(m)}||,||\boldsymbol{\theta}^{(m)}||\leq G$, we have
\begin{align*}
    &\quad||\Tilde{\boldsymbol{\theta}}^{(m')}-\Tilde{\boldsymbol{\theta}}^{(m)}||\\
    &= ||\Tilde{\boldsymbol{\theta}}^{(m')}-\Tilde{\boldsymbol{\theta}}^{(m'-1)}+\cdots+\Tilde{\boldsymbol{\theta}}^{(m+1)}-\Tilde{\boldsymbol{\theta}}^{(m)}||\\
    &=||\frac{\boldsymbol{\theta}^{(m'-1)}-\Tilde{\boldsymbol{\theta}}^{(m'-1)}}{m'}+\cdots+\frac{\boldsymbol{\theta}^{(m)}-\Tilde{\boldsymbol{\theta}}^{(m)}}{m+1}||\\
    &\leq \frac{||\boldsymbol{\theta}^{(m'-1)}||+||\Tilde{\boldsymbol{\theta}}^{(m'-1)}||}{m'}+\cdots+\frac{||\boldsymbol{\theta}^{(m)}||+||\Tilde{\boldsymbol{\theta}}^{(m)}||}{m+1}\\
    &\leq \frac{2G(m'-m)}{m+1}.
\end{align*}
Therefore, for any arbitrary $\epsilon>0$, let $\frac{2G(m'-m)}{m+1}<\epsilon$, and we have $||\Tilde{\boldsymbol{\theta}}^{(m')}-\Tilde{\boldsymbol{\theta}}^{(m)}||\leq \epsilon$. Therefore we conclude that $\{\Tilde{\boldsymbol{\theta}}^{(m)}\}^{\infty}_{m=1}$ is a Cauchy sequence.
\end{proof}

Next we prove the proposed \cref{prop1}.

\begin{proof}
As stated in \cref{lemma:cauchy} that $\{\Tilde{\boldsymbol{\theta}}^{(m)}\}$ in \Cref{eq:primal-avg} is a Cauchy sequence, it hence is bounded and there exists a subsequence $b_n$ converging to its limit $L$. For any $\epsilon>0$, there exists $n,q\geq Q$ satisfying $||\Tilde{\boldsymbol{\theta}}^{(n)} - \Tilde{\boldsymbol{\theta}}^{(q)}|| < \frac{\epsilon}{2}$. Thus, there is a $b_m=\Tilde{\boldsymbol{\theta}}^{(q_m)}$, such that $q_m\geq Q$ and $||b_{q_m}-L||<\frac{\epsilon}{2}$.
\begin{align*}
    ||\Tilde{\boldsymbol{\theta}}^{(n)} - L||
    &= ||\Tilde{\boldsymbol{\theta}}^{(n)} - b_m + b_m - L|| \\
    &\leq ||\Tilde{\boldsymbol{\theta}}^{(n)} - b_m|| + ||b_m - L|| \\
    &< ||\Tilde{\boldsymbol{\theta}}^{(n)} - \Tilde{\boldsymbol{\theta}}^{(q)}|| + \frac{\epsilon}{2} \\
    &< \epsilon.
\end{align*}
Since $\epsilon$ is arbitrarily small, we prove that the sequence $\{\Tilde{\boldsymbol{\theta}}^{(m)}\}^{\infty}_{m=1}$ converges to its limit $L = \Tilde{\boldsymbol{\theta}}^*$ asymptotically.
\end{proof}

\section{Proof Sketch of \cref{prop:prop2}}
\label{sec:proof of prop2}

\noindent \textbf{Proposition 2.} \textit{(Bounds for  $\mathcal{L}(\Tilde{\boldsymbol{\theta}}^{(m)})$ and the violation of $g(\Tilde{\boldsymbol{\theta}}^{(m)})$~\cite{averagedDS-SJO-2009})
\label{prop2}
Let the dual sequence $\{\lambda^{(m)}\}^{\infty}_{m=1}$ be generated through \Cref{eq:dual-solution} and $\{\Tilde{\boldsymbol{\theta}}^{(m)}\}^{\infty}_{m=1}$ be the averages in \Cref{eq:primal-avg}. Under \cref{assump1}, we have
\begin{enumerate}
    \item An upper bound on the amount of constraint violation of $\Tilde{\boldsymbol{\theta}}^{(m)}$ that $\big\lVert\big[g(\Tilde{\boldsymbol{\theta}}^{(m)})\big]_+\big\rVert\leq \frac{\lambda^{(m)}}{m\eta_2}$.
    \item An upper bound on $\mathcal{L}(\Tilde{\boldsymbol{\theta}}^{(m)})$ that $\mathcal{L}(\Tilde{\boldsymbol{\theta}}^{(m)}) \leq f^*+\frac{(\lambda^{(0)})^2}{2m\eta_2}+\frac{\eta_2 L^2}{2}$, where $\big\lVert g(\Tilde{\boldsymbol{\theta}}^{(m)})\big\rVert<L$ and $L>0$.
    \item A lower bound $\mathcal{L}(\Tilde{\boldsymbol{\theta}}^{(m)}) \geq f^*-\lambda^*\cdot\big\lVert\big[g(\Tilde{\boldsymbol{\theta}}^{(m)})\big]_+\big\rVert$.
\end{enumerate}
where $[u]_+$ denotes the projection of $[u]$ on the nonnegative orthant. $f^*$ is the optimal solution of \Cref{eq:opt-problem} and $\lambda^\ast$ denotes the optimal value of the dual variable.}  

\begin{proof}
1. According to \Cref{eq:dual-solution}, we have 
\begin{align*}
    \lambda^{(m)} 
    &\geq \lambda^{(m-1)}+\eta_2\cdot\Big(g(\Tilde{\boldsymbol{\theta}}^{(m)})-\gamma-\delta\lambda^{(m-1)}\Big).
\end{align*}
Under \cref{assump1}, $g(\boldsymbol{\theta})$ is convex, we have
\begin{align*}
    g(\Tilde{\boldsymbol{\theta}}^{(m)})
    &\leq \frac{1}{m}\sum_{i=1}^{m-1} g(\boldsymbol{\theta}^{(i)})\\
    &= \frac{1}{m\eta_2} \sum_{i=1}^{m-1} \eta_2 g(\boldsymbol{\theta}^{(i)})\\
    &\leq \frac{1}{m\eta_2} (\lambda^{(m)}-\lambda^{(0)})\\
    &\leq \frac{\lambda^{(m)}}{m\eta_2}, \quad\forall m\geq 1.
\end{align*}
Since $\lambda^{(m)}\geq 0$, we derive $||[g(\Tilde{\boldsymbol{\theta}}^{(m)})]_+||\leq \frac{\lambda^{(m)}}{m\eta_2}$.

2. Under \cref{assump1} and \Cref{eq:primal-solution,eq:dual-solution}, we have $q^*=f^*$. Together with the condition that $f(\boldsymbol{\theta})$ is convex and $\boldsymbol{\theta}\in\Theta$, we have
\begin{align*}
    &\mathcal{L}(\Tilde{\boldsymbol{\theta}}^{(m)})\\
    \leq &\frac{1}{m}\sum_{i=0}^{m-1} \mathcal{L}(\boldsymbol{\theta}^{(i)}) \\
    = &\frac{1}{m}\sum_{i=0}^{m-1} \Big(\mathcal{L}(\boldsymbol{\theta}^{(i)})+\lambda^{(i)} g(\Tilde{\boldsymbol{\theta}}^{(i+1)})-\lambda^{(i)}g(\Tilde{\boldsymbol{\theta}}^{(i+1)}) \Big)\\
    = &\frac{1}{m}\sum_{i=0}^{m-1} \Big(\mathcal{L}(\boldsymbol{\theta}^{(i)})+\lambda^{(i)} g(\Tilde{\boldsymbol{\theta}}^{(i+1)})\Big)- \frac{1}{m}\sum_{i=0}^{(m-1)}\lambda^{(i)} g(\Tilde{\boldsymbol{\theta}}^{(i+1)}) \\
    \leq &q^* -\frac{1}{m}\sum_{i=0}^{m-1}\lambda^{(i)}g(\Tilde{\boldsymbol{\theta}}^{(i+1)}).
\end{align*}
From \Cref{eq:dual-solution}, we have
\begin{align*}
    (\lambda^{(i+1)})^2
    &= \Big(\Big[\lambda^{(i)}+\eta_2\Big(g(\Tilde{\boldsymbol{\theta}}^{(i+1)})-\gamma-\delta\lambda^{(i)}\Big)\Big]_+\Big)^2\\
    &\leq \Big(\lambda^{(i)}+\eta_2 g(\Tilde{\boldsymbol{\theta}}^{(i+1)})\Big)^2\\
    &\leq (\lambda^{(i)})^2+\quad2\eta_2\lambda^{(i)} g(\Tilde{\boldsymbol{\theta}}^{(i+1)})\\
    &\quad+\Big(\eta_2||g(\Tilde{\boldsymbol{\theta}}^{(i+1)})||\Big)^2\\
    -\lambda^{(i)} g(\Tilde{\boldsymbol{\theta}}^{(i+1)}) & \leq \frac{(\lambda^{(i)})^2-(\lambda^{(i+1)})^2+\Big(\eta_2||g(\Tilde{\boldsymbol{\theta}}^{(i+1)})||\Big)^2}{2\eta_2}.
\end{align*}
Taking $-\lambda^{(i)} g(\Tilde{\boldsymbol{\theta}}^{(i+1)})$ back to $\mathcal{L}(\Tilde{\boldsymbol{\theta}}^{(m)})$, we have
\begin{align*}
    &\mathcal{L}(\Tilde{\boldsymbol{\theta}}^{(m)})\\
    \leq &q^*+\frac{1}{m}\sum_{i=0}^{m-1} \frac{(\lambda^{(i)})^2-(\lambda^{(i+1)})^2+\Big(\eta_2||g(\Tilde{\boldsymbol{\theta}}^{(i+1)})||\Big)^2}{2\eta_2}\\
    = &q^* + \frac{1}{m}\sum_{i=0}^{m-1} \frac{(\lambda^{(i)})^2-(\lambda^{(i+1)})^2}{2\eta_2} + \frac{1}{m}\sum_{i=0}^{m-1}\frac{\Big(\eta_2||g(\Tilde{\boldsymbol{\theta}}^{(i+1)})||\Big)^2}{2\eta_2}\\
    = &q^* + \frac{(\lambda^{(0)})^2-(\lambda^{(m)})^2}{2m\eta_2} + \frac{\eta_2}{2m}\sum_{i=0}^{m-1}||g(\Tilde{\boldsymbol{\theta}}^{(i+1)})||^2\\
    \leq &f^* + \frac{(\lambda^{(0)})^2}{2m\eta_2} + \frac{\eta_2 L^2}{2}.
\end{align*}

3. By definition, $\forall \boldsymbol{\theta}\in\Theta$, we have
\begin{align*}
    \mathcal{L}(\boldsymbol{\theta})+\lambda^*\cdot g(\boldsymbol{\theta})
    \geq \mathcal{L}(\boldsymbol{\theta}^*)+\lambda^*\cdot g(\boldsymbol{\theta}^*)
    = q(\lambda^*).
\end{align*}
Since $\Tilde{\boldsymbol{\theta}}\in\Theta$, $\forall m\geq 1$, we have
\begin{align*}
    \mathcal{L}(\Tilde{\boldsymbol{\theta}}^{(m)})
    &= \mathcal{L}(\Tilde{\boldsymbol{\theta}}^{(m)}) + \lambda^*\cdot g(\Tilde{\boldsymbol{\theta}}^{(m)}) - \lambda^*\cdot g(\Tilde{\boldsymbol{\theta}}^{(m)})\\
    &\geq q(\lambda^*) - \lambda^*\cdot g(\Tilde{\boldsymbol{\theta}}^{(m)})\\
    &\geq q(\lambda^*) - \lambda^*\cdot \big[g(\Tilde{\boldsymbol{\theta}}^{(m)})\big]_+\\
    &\geq f^*-\lambda^*\cdot \big\lVert \big[g(\Tilde{\boldsymbol{\theta}}^{(m)})\big]_+\big\rVert.
\end{align*}
\end{proof}

\section{Proof Sketch of \cref{prop:prop3}}
\label{sec:proof of prop3}
The Beta loss in \Cref{eq:beta-loss} is equivalent to the loss function used in DEAR when each actor is associated with one action only. The Evidential Neural Network (ENN), which was initially introduced in~\cite{sensoy-2018-nips} and further adopted by DEAR in open-set action recognition, is limited to classifying an actor associated with only one action. The key idea of ENN is to replace the output of a classification network with the parameters $\{\alpha_i\}_{i=1}^K$ of $K$ Dirichlet densities. To detect a novel actor with multiple actions, in this work, we modify ENN by estimating parameter pairs $\{\alpha_i,\beta_i\}_{i=1}^K$ of $K$ Beta distributions. \Cref{prop:prop3} shows that the Beta loss of an actor proposed in \Cref{eq:beta-loss-1-actor} is equivalent to the loss function in DEAR when $K=1$.

\begin{proposition}
\label{prop:prop3}
Denote $\mathcal{L}'_j(\boldsymbol{\theta})$ as the loss function introduced in~\cite{sensoy-2018-nips}, where
\begin{align}
\label{eq:enn-loss}
    \mathcal{L}'_j(\boldsymbol{\theta}) = \sum_{i=1}^{K'} y_{ij} \Big(\psi(\sum_{i=1}^{K'} \alpha_{ij})-\phi(\alpha_{ij})  \Big),
\end{align}
where $K'$ is the total number of classes in a multi-class classification task. Denote $\mathcal{L}_j(\boldsymbol{\theta})$ is the loss function proposed in \Cref{eq:beta-loss-1-actor}. We have $\mathcal{L}'_j(\boldsymbol{\theta})=\mathcal{L}_j(\boldsymbol{\theta})$ when $K=1$ (\ie, $K'=2$).
\end{proposition}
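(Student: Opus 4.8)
The plan is to specialize both loss functions to the binary case and then exhibit an explicit correspondence between the parameters of the single Beta density and those of the two-category Dirichlet density used by ENN. First I would set $K=1$ in the proposed Beta loss of \Cref{eq:beta-loss-1-actor}, which collapses the outer sum to a single term and yields $\mathcal{L}_j(\boldsymbol{\theta}) = y_{1j}\bigl(\psi(\alpha_{1j}+\beta_{1j})-\psi(\alpha_{1j})\bigr) + (1-y_{1j})\bigl(\psi(\alpha_{1j}+\beta_{1j})-\psi(\beta_{1j})\bigr)$. In parallel I would set $K'=2$ in the ENN loss of \Cref{eq:enn-loss}, expanding the class sum into $\mathcal{L}'_j(\boldsymbol{\theta}) = y_{1j}\bigl(\psi(\alpha_{1j}+\alpha_{2j})-\psi(\alpha_{1j})\bigr) + y_{2j}\bigl(\psi(\alpha_{1j}+\alpha_{2j})-\psi(\alpha_{2j})\bigr)$, reading the $\phi$ in \Cref{eq:enn-loss} as the digamma $\psi$.

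The crux of the argument is the identification of parameters, which rests on the elementary fact that a Dirichlet density over two categories is exactly a Beta density. Concretely, I would identify the positive evidence with the first Dirichlet concentration, $\alpha_{1j}\equiv\alpha_j$, and the negative evidence with the second, $\alpha_{2j}\equiv\beta_j$; under this map $\alpha_{1j}+\alpha_{2j}=\alpha_j+\beta_j$, so the shared $\psi(\cdot)$ term in the two losses coincides. I would likewise relate the label encodings: a single multi-label indicator $y_{1j}\in\{0,1\}$ corresponds to the one-hot vector $(y_{1j},y_{2j})$ of the two-class problem, so that $y_{2j}=1-y_{1j}$.

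Substituting $y_{2j}=1-y_{1j}$, $\alpha_{1j}=\alpha_j$, and $\alpha_{2j}=\beta_j$ into the specialized ENN loss reproduces the specialized Beta loss term by term, giving $\mathcal{L}'_j(\boldsymbol{\theta})=\mathcal{L}_j(\boldsymbol{\theta})$. Once the identifications are fixed, the remaining work is a direct matching of the two displayed scalar expressions, with no integration or estimation left to perform.

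I expect the main obstacle to be justifying the parameter/label correspondence rather than the computation: one must argue that the second Dirichlet concentration $\alpha_2$, which ENN treats as evidence for a nominal ``second class,'' plays precisely the role of the negative evidence $\beta$ in the binomial/Beta formulation, and that the binary multi-label target is consistently re-expressed as a one-hot two-class target. A careful statement would anchor this on the conjugate-prior identity $\mathrm{Dir}(\alpha_1,\alpha_2)=\mathrm{Beta}(\alpha_1,\alpha_2)$, after which the equality of the two Bayes risks follows because both losses integrate the same binary cross-entropy against the same density.
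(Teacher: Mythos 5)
Your proposal is correct and follows essentially the same route as the paper's own proof: specialize both losses to the binary case, identify $\alpha_{1j}=\alpha_j$, $\alpha_{2j}=\beta_j$ via the two-category Dirichlet/Beta correspondence, and match terms. You are right to read the $\phi$ in the statement as the digamma $\psi$ (a typo), and your explicit note that the one-hot label satisfies $y_{2j}=1-y_{1j}$ fills in a step the paper leaves implicit.
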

\begin{proof}
When $K=1$,
\begin{align*}
    \mathcal{L}_j(\boldsymbol{\theta}) = \sum\nolimits_{i=1}^1 \int\Big[\textbf{BCE}(y_{ij},p_{ij})\Big]\textbf{Beta}(p_{ij};\alpha_{ij},\beta_{ij})dp_{ij}.
\end{align*}
To simplify, we omit the subscript $i$ and rewrite
\begin{align*}
    \small
    &\mathcal{L}_j(\boldsymbol{\theta}) 
    =\int\Big[\textbf{BCE}(y_{j},p_{j})\Big]\textbf{Beta}(p_{j};\alpha_{j},\beta_{j})dp_{j}\\
    &=y_{j}\Big(\psi(\alpha_{j}+\beta_{j})-\psi(\alpha_{j})\Big)+(1-y_{j})\Big(\psi(\alpha_{j}+\beta_{j})-\psi(\beta_{j})\Big).
\end{align*}
As for $\mathcal{L}'_j(\boldsymbol{\theta})$, $K=1$ indicates binary classification which refers $K'=2$.
\begin{align*}
    \mathcal{L}'_j(\boldsymbol{\theta}) 
    &=\int\Big[\textbf{CE}(y_{j},p_{j})\Big]\textbf{Dir}(p_{j};\alpha_{j},\beta_{j})dp_{j}\\
    &=\sum_{i=1}^2 y_{ij}\Big(\psi(\sum_{i=1}^2 \alpha_{ij})-\psi(\alpha_{ij}) \Big).
\end{align*}
We complete the proof by setting $\alpha_{1j}=\alpha_j$ and $\alpha_{2j}=\beta_{j}$.
\end{proof}

\section{Detailed Ablation Study}
In Table \ref{tab:ablation-compinent_m345}, we provide a detailed ablation study on the AVA dataset~\cite{gu-2018-cvpr-ava} to explore the contributions of different components in our framework. Along with the primal-dual updating step $m=2,3,4,5$, the performance is slightly improved but nearly saturated when $m=2$. 

\section{Full Experimental Results}
\label{sec:additional results}
As presented in Table \ref{tab:baseline_AVA_full} and Table \ref{tab:baseline_Charades_full}, we report the results of compared methods by using all four novelty score estimation mechanism. It is worth mentioning that other state-of-the-arts perform also well by using the belief based score in terms of some metrics.

Furthermore, we show more visual results of compared methods in Figure \ref{fig:visualization-appendix-AVA} and Figure \ref{fig:visualization-appendix-Charades}. It can be seen that our method performs better than other methods on two datasets for single/multi-actor settings.

\begin{figure*}[t!]
    \centering
    \includegraphics[width=\linewidth]{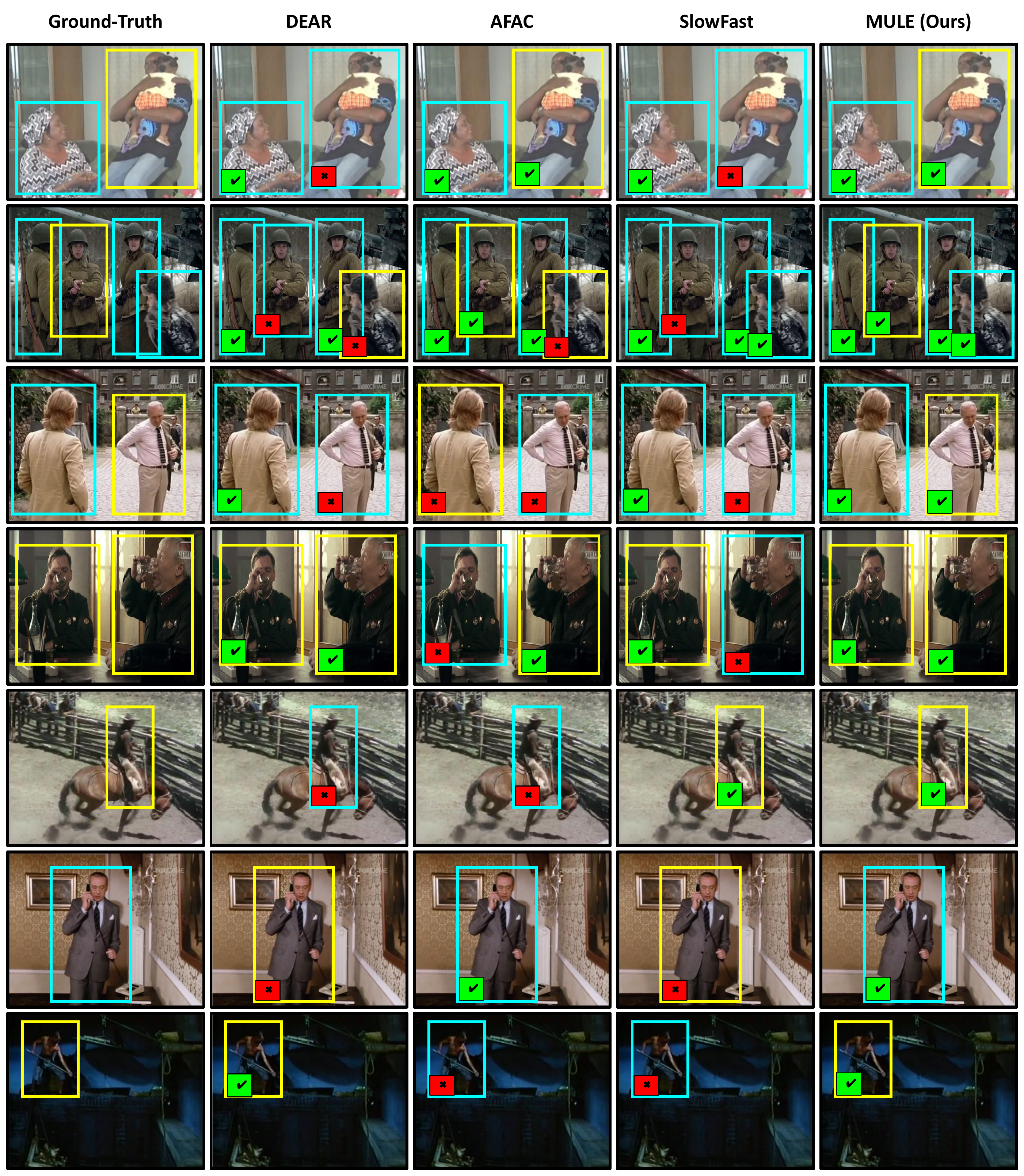}
    \caption{Visual comparison with our method and state-of-the-art on AVA~\cite{gu-2018-cvpr-ava}. \textcolor{Cyan}{Cyan} and \textcolor{Goldenrod!80!black}{yellow} boxes denote the predictions of actors with known and novel actions, respectively. \textcolor{Green}{\cmark ~marks} and \textcolor{Red}{\xmark~marks} indicate correct and false predictions, respectively.}
    \label{fig:visualization-appendix-AVA}
\end{figure*}

\begin{figure*}[t!]
    \centering
    \includegraphics[width=\linewidth]{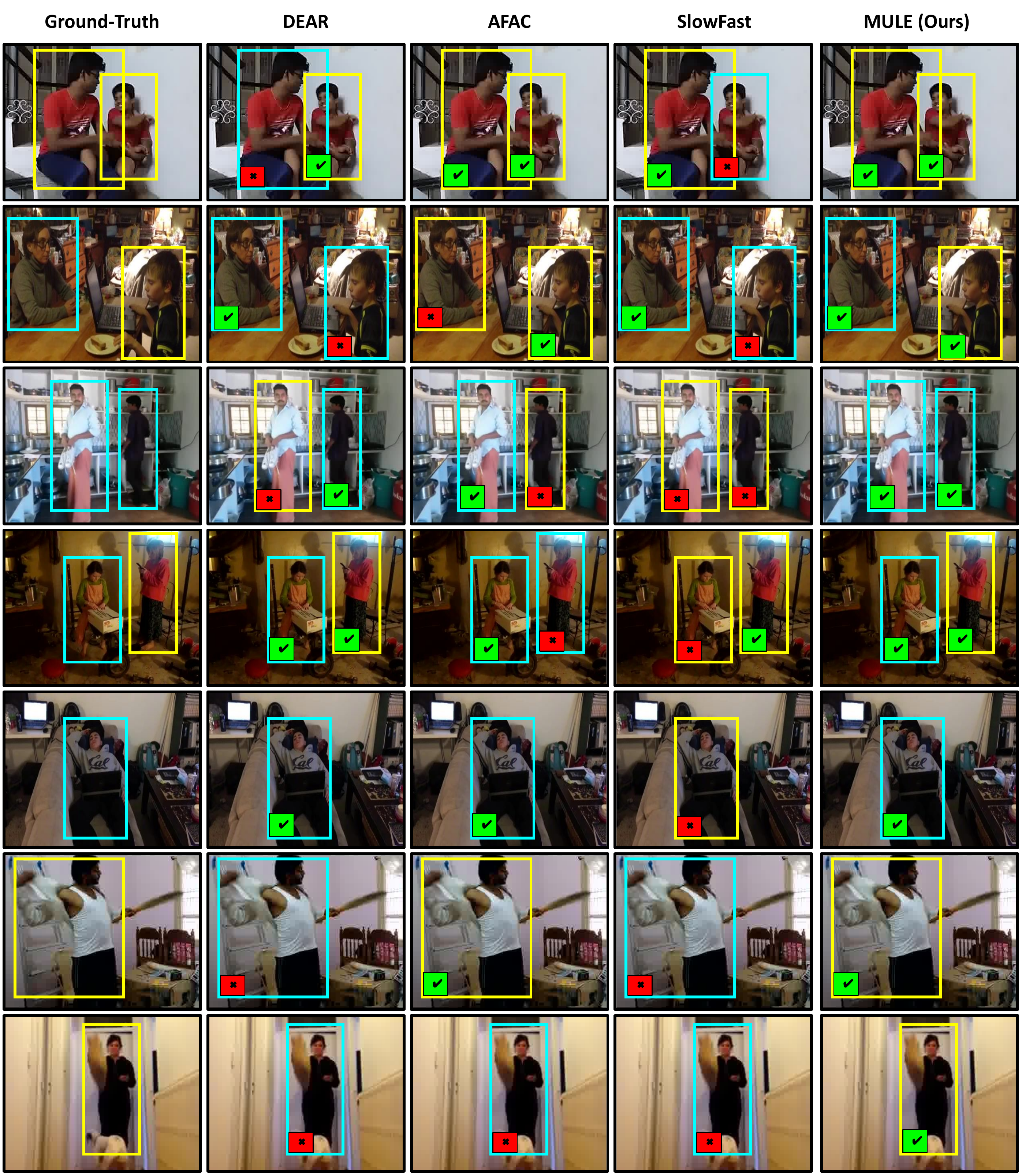}
    \caption{Visual comparison with our method and state-of-the-art on Charades~\cite{sigurdsson2016hollywood_Charades}. \textcolor{Cyan}{Cyan} and \textcolor{Goldenrod!80!black}{yellow} boxes denote the predictions of actors with known and novel actions, respectively. \textcolor{Green}{\cmark ~marks} and \textcolor{Red}{\xmark~marks} indicate correct and false predictions, respectively.}
    \label{fig:visualization-appendix-Charades}
\end{figure*}

\end{document}